\def\eqref#1{equation~\ref{#1}}
\def\1{\bm{1}}
\DeclareMathAlphabet{\mathsfit}{\encodingdefault}{\sfdefault}{m}{sl}
\SetMathAlphabet{\mathsfit}{bold}{\encodingdefault}{\sfdefault}{bx}{n}
\newcommand{\E}{\mathbb{E}}
\newcommand{\KL}{D_{\mathrm{KL}}}
\newcommand{\xb}{\boldsymbol{x}}
\newcommand{\Xb}{\boldsymbol{X}}
\newcommand{\Db}{\boldsymbol{D}}
\newcommand{\Ib}{\boldsymbol{I}}
\newcommand{\gb}{\boldsymbol{g}}
\newcommand{\yb}{\boldsymbol{y}}
\newcommand{\fb}{\boldsymbol{f}}
\newcommand{\wb}{\boldsymbol{w}}
\newcommand{\Cb}{\boldsymbol{C}}
\newcommand{\zb}{\boldsymbol{z}}
\newcommand{\vbo}{\boldsymbol{v}}
\newcommand{\ntwo}[1]{\lVert #1 \rVert_2}
\newcommand{\nfro}[1]{\lVert #1 \rVert_F}
\newcommand{\trace}{\textrm{tr}}
\newcommand{\Rmn}{\mathbb{R}^{m\times n}}
\newcommand{\RR}{\mathbb{R}}
\newcommand{\Rn}{\mathbb{R}^n}
\newcommand{\Rm}{\mathbb{R}^m}
\newcommand{\Ra}[1]{\mathbb{R}^{#1}}
\newcommand{\No}{\mathcal{N}}
\newcommand{\mub}{\boldsymbol{\mu}}
\newcommand{\Sigmab}{\boldsymbol{\Sigma}}
\newcommand{\thetab}{\boldsymbol{\theta}}
\newcommand{\epsb}{\boldsymbol{\epsilon}}
\newcommand{\alphabart}{\bar{\alpha}_t}
\newcommand{\alphabartt}{\bar{\alpha}_{t-1}}
\newcommand{\eqqref}[1]{(\ref{#1})}
\newcommand{\xbzt}{\hat{\xb}_{0}^{t}}
\newcommand{\pgdm}{\textrm{\(\Pi\)}GDM }
\newcommand{\blue}[1]{\textcolor{black}{#1}}
\title{Diffusion Models for Solving Inverse Problems via Posterior Sampling with Piecewise Guidance}
\author{\name Saeed Mohseni-Sehdeh \email saeedmohseni@vt.edu \\
      \addr Department of Electrical and Computer Engineering\\
      Virginia Tech
      \AND
      \name Walid Saad \email walids@vt.edu \\
      \addr Department of Electrical and Computer Engineering\\
      Virginia Tech
      \AND
      \name Kei Sakaguchi \email sakaguchi@mobile.ee.titech.ac.jp\\
      \addr Department of Electrical and Electronic Engineering\\
      Institute of Science Tokyo
      \AND
      \name Tao Yu \email yutao@mobile.ee.titech.ac.jp\\
      \addr Department of Electrical and Electronic Engineering\\
      Institute of Science Tokyo}
\theoremstyle{plain} 
\newtheoremstyle{ieeebold}
  {\topsep}   
  {\topsep}   
  {\normalfont}  
  {0pt}       
  {\bfseries} 
  {.}         
  { }         
  {\thmname{#1}\thmnumber{ #2}\thmnote{ (#3)}}
\theoremstyle{ieeebold}
\newtheorem{theorem}{Theorem}
\newtheorem{lemma}{Lemma}
\newcommand{\nwidth}{.81}
\begin{document}

\maketitle

\begin{abstract}
Diffusion models are powerful tools for sampling from high-dimensional distributions by progressively transforming pure noise into structured data through a denoising process. When equipped with a guidance mechanism, these models can also generate samples from conditional distributions. In this paper, a novel diffusion-based framework is introduced for solving inverse problems using a piecewise guidance scheme. The guidance term is defined as a piecewise function of the diffusion timestep, facilitating the use of different approximations during high-noise and low-noise phases. This design is shown to effectively balance computational efficiency with the accuracy of the guidance term.
Unlike task-specific approaches that require retraining for each problem, the proposed method is problem-agnostic and readily adaptable to a variety of inverse problems. Additionally, it explicitly incorporates measurement noise into the reconstruction process. 
The effectiveness of the proposed framework is demonstrated through extensive experiments on image restoration tasks, specifically image inpainting and super-resolution. 
Using a class conditional diffusion model for recovery, compared to the \blue{pseudoinverse-guided diffusion
model (\textrm{\(\Pi\)}GDM) baseline}, the proposed framework achieves a reduction in inference time of \(25\%\) for inpainting with both random and center masks, and \(23\%\) and \(24\%\) for \(4\times\) and \(8\times\) super-resolution tasks, respectively, while incurring only negligible loss in PSNR and SSIM.

\end{abstract}

\section{Introduction}
Diffusion models are a class of deep generative models designed to sample from complex data distributions. Diffusion models have been shown to outperform alternatives like generative adversarial networks (GANs) in image synthesis tasks and currently represent the state of the art in this domain \cite{dhariwal_diffusion_2021}. The core idea behind diffusion models is to gradually remove the structure from given input data through a forward diffusion process, transforming it into a tractable distribution, which is typically Gaussian white noise. The model is then trained to learn the reverse process, effectively denoising the data step by step to reconstruct a sample that closely approximates the original data distribution. Diffusion models have been applied across various fields, including computer vision \cite{baranchuk_label-efficient_2021, amit_segdiff_2022}, natural language processing \cite{austin_structured_2021, hoogeboom_argmax_2021}, audio synthesis \cite{kong_diffwave_2021}, and medical image reconstruction \cite{chung_mr_2023, cao_high-frequency_2024}.

Once trained on a dataset from a specific distribution, diffusion models can generate samples that follow that distribution \cite{yang_diffusion_2023}. These samples are inherently random, as the generation process begins with a noise vector sampled randomly. Consequently, the resulting samples may correspond to any region within the support of the learned distribution. Diffusion models can be employed for conditional sampling \cite{kawar_denoising_2022} when their denoising process is adapted to incorporate auxiliary information, enabling the generation of samples that are consistent with the provided conditions.

This conditional sampling capability makes diffusion models promising candidates for solving inverse problems, where the objective is to reconstruct a degraded signal. The core idea is to recover the original signal by sampling from the posterior distribution conditioned on the observed degraded input. In the context of image inverse problems, this approach has demonstrated the ability to produce perceptually high-quality outputs \cite{blau_perception-distortion_2018}. This motivates the use of diffusion models as effective tools for solving inverse problems.

Numerous challenges arise when employing diffusion models to solve inverse problems. These models are typically large and require extensive computational resources and substantial amounts of data for effective training. Given the wide variety of inverse problems, it is impractical to train a separate diffusion model for each specific task \cite{song_pseudoinverse-guided_2022, kawar_denoising_2022, song_solving_2022}. Therefore, a key challenge is developing a unified framework that can address multiple inverse problems using a single, pre-trained diffusion model without the need for task-specific retraining. Another key challenge is maintaining high-quality restoration, typically measured by standard metrics such as the peak signal-to-noise ratio (PSNR) and structural similarity index (SSIM). A general-purpose system must achieve results that are competitive with models trained specifically for individual inverse problems. In addition, computational efficiency is critical: The faster the conditional sampling process, the more practical the framework becomes.


\subsection{Related Works}\label{{subsec.related works}}
Various deep neural network-based techniques have been proposed for solving inverse problems \cite{ongie_deep_2020, venkatakrishnan_plug-and-play_nodate, romano_little_2017, mataev_deepred_2019, bora_compressed_2017, daras_intermediate_2021, menon_pulse_2020} and the aforementioned challenges. These methods can be broadly categorized into supervised (problem-specific) and unsupervised (problem-agnostic) approaches. In supervised methods, the degradation model is known during both training and inference. In contrast, unsupervised methods assume that the degradation is only known at inference time. The unsupervised approaches are particularly appealing, as they better reflect real-world scenarios where access to degradation models during training is often limited or unavailable, and these approaches do not rely on training problem-specific models.

One class of unsupervised deep neural network techniques addresses inverse problems by iteratively applying a pretrained model \cite{venkatakrishnan_plug-and-play_nodate, romano_little_2017, mataev_deepred_2019, sun_online_2019}. Methods such as plug-and-play (PnP) \cite{venkatakrishnan_plug-and-play_nodate}, regularization by denoising (RED) \cite{romano_little_2017}, and their successors in \cite{mataev_deepred_2019} and \cite{sun_online_2019} incorporate a denoiser into an iterative recovery process. Another line of work leverages (GANs) \cite{bora_compressed_2017, daras_intermediate_2021, menon_pulse_2020}, where the latent space of a pretrained GAN is searched to find latent codes that generate images best aligned with the observed measurements. These methods often require a large number of iterations to converge to a satisfactory solution making them time inefficient and not practical for real use cases in which low inference time is critical.

Diffusion models are another class of deep neural networks that can be used for solving inverse problems, with applications in both supervised \cite{kadkhodaie_stochastic_2021, jalal_instance-optimal_2021, kawar_snips_2021, kawar_stochastic_2021} and unsupervised \cite{chung_come-closer-diffuse-faster_2022, dhariwal_diffusion_2021, saharia_palette_2022, saharia_image_2023} settings. Denoising diffusion reconstruction models (DDRM) \cite{kawar_denoising_2022} represent a diffusion-based approach for solving unsupervised inverse problems. In this method, denoising is performed in the spectral domain of the degradation matrix, and the results are subsequently transformed back into the original image space. While DDRM has demonstrated promising restoration quality, its effectiveness is limited in scenarios where the relationship between the measurement noise level and the diffusion noise level in the spectral domain is weak.

Another diffusion-based method for unsupervised inverse problems is the pseudoinverse-guided diffusion model (\textrm{\(\Pi\)}GDM) \cite{song_pseudoinverse-guided_2022}. This approach computes the guidance term using a one-step denoising approximation of the posterior distribution of the data conditioned on the noisy latent diffusion variable. Unlike DDRM, \textrm{\(\Pi\)}GDM enables updates regardless of the levels of diffusion and measurement noise. Although effective, \pgdm requires computing the derivative of the denoiser's output with respect to its input, a computationally intensive operation, particularly when the denoiser's complexity and the dimensionality of the data increase.

The computational and time complexity of GAN-based methods and \textrm{\(\Pi\)}GDM, the sensitivity of DDRM to measurement noise, and the need for fine-tuning and retraining in supervised approaches highlight the need for a new, reliable, problem-agnostic framework for solving inverse problems. Such a framework should deliver high-quality restoration with low inference time and computational cost, while also explicitly accounting for measurement noise, which is almost always present in real-world scenarios.

\subsection{Contributions}

The main contribution of this paper is the introduction of a new diffusion-based framework for solving inverse problems, which uses a \blue{\emph{piecewise function}} to approximate the guidance term. This approach preserves accuracy while significantly reducing computational complexity. Moreover, the proposed method explicitly accounts for measurement noise. Specifically, our key contributions include:
\begin{itemize}
    \item We propose a novel, problem-agnostic, diffusion-based framework for solving inverse problems via posterior sampling, which employs a \textit{piecewise guidance function} that depends on both the measurement (\blue{with possible additive measurement noise}) and the noisy latent variable at each diffusion time step.
    
    \item We show how the proposed method leverages the varying noise and information content of latent variables across time steps to compute \textit{time-dependent guidance} values, enabling a more effective tradeoff between computational efficiency and reconstruction accuracy.
    
    
    \item We derive mathematical expressions that quantify the quality of the approximation in terms of the Kullback–Leibler (KL) divergence between the true and approximated distributions used in the guidance computation, providing insights into how problem parameters affect the effectiveness of the approximation.

    \item Extensive experiments show that the proposed method reduces inference time while maintaining comparable performance in terms of PSNR and SSIM. Compared to \(\Pi\)GDM, it achieves a reduction of \(25\%\) for both inpainting with random and center masks, and \(23\%\) and \(24\%\) for \(4\times\) and \(8\times\) super-resolution, respectively, using a class conditional diffusion model.

\end{itemize}

The rest of this paper is organized as follows. Section \ref{sec.problem} introduces the problem and its challenges. Then, Section \ref{sec.solution} details the proposed method. Section \ref{sec.Theory} presents a theoretical analysis of how the problem parameters influence the proposed solution, and Section \ref{sec.simulation} presents and analyzes the simulation results. Finally, conclusions are drawn in Section \ref{sec.conclusion}.

\section{Problem Statement}\label{sec.problem}
Inverse problems are a class of signal processing problems whose objective is to recover an original signal from observed measurements. Consider the following model that relates an original signal \(\xb_0\in\Rn\) with a corresponding measurement \(\yb\in\Rm\) of that signal:
\begin{equation}\label{eq.inverse}
    \yb = \Cb\xb_0 + \zb,
\end{equation}
where \(\Cb \in \Rmn\) represents the measurement model and \(\zb \in \Rm\) is an i.i.d measurement noise distributed according to \(\No(\boldsymbol{0}, \sigma_{z}^2\Ib)\). In practical inverse problems \(m<n\). The objective is to recover \(\xb_0\) from the measurement \(\yb\). Inverse problems are central to domains such as medical imaging \cite{bertero_introduction_2021}, computer vision \cite{mohammad-djafari_deep_2023} (e.g., image deblurring, super-resolution), and astronomy \cite{craig_inverse_1986}.

Inverse problems are inherently ill-posed \cite{calvetti_inverse_2018}; small changes in the observations can lead to large variations in the solution. Moreover, solutions to inverse problems may not be unique \cite{calvetti_inverse_2018}, requiring the incorporation of prior knowledge and regularization to obtain meaningful results. This ``ill-posedness" and non-uniqueness arise from the indirect, incomplete, and noisy nature of the measurements. In \eqqref{eq.inverse}, the incompleteness of the measurement is reflected in the condition \(m<n\).

Different approaches have been proposed for solving inverse problems, such as regularization-based techniques, which impose constraints on the solution, such as sparsity or low-rank structure \cite{bertero_introduction_2021}, and statistical approaches, which assume a prior distribution on the original signal and solve the problem by finding suitable estimators \cite{bertero_introduction_2021}. One method that has been shown to produce high-quality solutions is \emph{posterior sampling} \cite{blau_perception-distortion_2018}. The core idea of this approach is to solve \eqqref{eq.inverse} by generating samples from the conditional distribution \(p(\xb_0|\yb)\). Diffusion models can be used for sampling from such distributions, provided their denoising procedures are appropriately modified.

Diffusion models perform sampling from distributions by first structurally removing data information and increasing noise. This is achieved by forming a chain of random variables which progressively get noisier. This procedure is called \emph{forward path} and it continues until the point that the structure in the data is reduced to pure noise, where sampling from that is practical. Diffusion models by learning the reverse process of removing noise gradually from the noisy data known as \emph{backward path}, achieve sampling from the distribution.

When the chain of created noisy random variables modeled as a continuum, the forward path of diffusion models can be expressed by a stochastic differential equation (SDE) of the form
\begin{equation}\label{eq.forward}
  d\xb = \fb(\xb,t)dt + g(t)d\wb_f, 
\end{equation}
where \(\fb: \Ra{n+1} \to \Rn\) is the drift term which represents the deterministic trend of the state variable \(\xb\) and slowly removes the presence of data, \(g: \RR \to \RR\) represents the intensity of the randomness affecting the state variable and gradually increases the presence of noise in the data and \(\wb_f\) is standard Wiener processes. One such choice that has been adopted heavily are  \(\fb(\xb,t) = -\frac{1}{2}\beta(t)\xb\) and \(g(t) = \sqrt{\beta(t)}\), in which \(\beta(t): \RR \to (0,1)\) is a monotonically increasing function of \(t\).

As shown in \cite{anderson_reverse-time_1982}, \eqqref{eq.forward} can be solved in reverse by solving
\begin{equation}\label{eq.backward}
    d\xb = [\fb(\xb,t) - \frac{1}{2}g(t)^2\nabla_{\xb}\log{p_t}(\xb)] + g(t)d\wb_b,
\end{equation}
where the time variable \(t\) moves in the opposite direction of \eqqref{eq.forward}, \(\wb_b\) is the standard Wiener process in reverse time, and \(p_t\) is the probability distribution function (PDF) of the random variable \(\xb(t)\). As shown in \cite{anderson_reverse-time_1982}, \eqqref{eq.forward} and \eqqref{eq.backward} are equivalent in the sense that the PDF of \(\xb(t)\) is the same for the solutions of both for all \(t\). Thus, solving \eqqref{eq.backward} can result in samples from \(p_t(\xb)\) for each time step \(t\), and \(t=0\) corresponds to the PDF of data. 
In practice, \eqqref{eq.backward} is solved using numerical methods such as Euler–Maruyama, provided that \(\nabla_{\xb}\log{p_t}(\xb)\) (the score function) is known for all the noise levels, which can be approximated with using score estimating models. In numerical solvers the continuous process \(\xb(t)\) is discretized as \(\xb(t)=\xb_t\) where \(t \in \{1,2, \ldots, T\}\). Then, \eqqref{eq.backward} is solved by assuming that \(\xb_T\) is almost pure noise, and a random sample from that serves as the starting point of the numerical solvers. 

In order to use diffusion models for solving inverse problems using posterior sampling, one can solve \eqqref{eq.backward} with replacing \(\nabla_{\xb}\log{p_t}(\xb)\) with its problem specific counterpart, \(\nabla_{\xb_t}\log{p_t}(\xb_t|\yb)\), where \({p_t}(\xb_t|\yb)\) represents the PDF of the noisy variable at time step \(t\) given the measurement.
To calculate the problem-specific score function, Bayes' rule can be used
\begin{equation}\label{eq.bayesscore}
    \nabla_{\xb_t}\log{p_t(\xb_t|\yb)} = \nabla_{\xb_t}\log{p_t(\xb_t)} + \nabla_{\xb_t}\log{p_t(\yb|\xb_t)},
\end{equation}
where \(\nabla_{\xb_t}\log{p_t(\xb_t)}\) can be approximated using score estimation networks trained on the original data and it is part of the diffusion model training process. The second term (guidance term) in (\ref{eq.bayesscore}) adapts the sampling procedure to produce samples that are consistent with the given measurement.

Unfortunately, the calculation of a closed-form expression for \(\nabla_{\xb_t}\log{p_t(\yb|\xb_t)}\) is generally intractable. The reason for this is the presence of the diffusion noise on both sides of \(p_t(\yb|\xb_t)\). To elaborate, consider the following equation derived from \eqqref{eq.forward}
\begin{equation}
    \xb_t = \sqrt{\alphabart}\xb_0 + \sqrt{1 - \alphabart}\epsb, 
\end{equation}
where \(\alphabart = \prod_{i=1} ^t\alpha_t\), \(\alpha_t = 1- \beta_t\) and \(\beta_t = \beta(t)\) for \(t\in\{1,2,\ldots,T\}\) and \(\epsb \sim \No(\boldsymbol{0},\Ib)\). Then, for \(p_t(\yb|\xb_t)\) we have
\begin{align}\label{eq.cond}\nonumber
    p_t(\yb|\xb_t) & = p(\yb = \Cb\xb_0 + \zb|\xb_t = \sqrt{\alphabart}\xb_0 + \sqrt{1-\alphabart}\epsb)\\
    &= p\left(\yb = \frac{\Cb\xb_t}{\sqrt{\alphabart}} - \frac{\sqrt{1-\alphabart}}{\sqrt{\alphabart}}\Cb\epsb + \zb | \xb_t\right).
\end{align}
The presence of diffusion noise \(\epsb\) in both the conditioned and conditioning parts of (\ref{eq.cond}) makes the calculation intractable.

The computation of the problem-specific score function is crucial to the performance of diffusion-based methods for solving inverse problems. For the approach to be practical, the score function must be computable without training a separate model, as model retraining for each new problem reduces generality and increases computational overhead. Moreover, the computation should be efficient to avoid compromising inference speed and must account for measurement noise inherent in the observations. To address these challenges, we propose a method that enables efficient problem-specific score computation that accounts for measurement noise without model retraining. The next section outlines the key components of our approach.


\section{Proposed Conditional Score Estimation}\label{sec.solution}
We propose to approximate \(p_t(\yb|\xb_t)\) by using a piecewise function that varies across different time steps along the diffusion path. The idea is that (\ref{eq.cond}) can be simplified if \(\frac{\sqrt{1-\alphabart}}{\sqrt{\alphabart}}\Cb\epsb = \boldsymbol{0}\). There is no control over the value of \(\Cb\epsb\) as the diffusion noise is random and \(\Cb\) is fixed, thus the only way to make the expression small is through the coefficient (\(\frac{\sqrt{1-\alphabart}}{\sqrt{\alphabart}}\)). By design of the diffusion models, this coefficient is always small at lower time steps \(t\), near the end of the backward path as for those values we have \(\alphabart\approx1\). This condition is a property of diffusion models and does not depend on the inverse problem, making it a versatile property.
Under this observation, for low values of \(t\) in the diffusion path, \(p_t(\yb|\xb_t)\) can be computed as follows 
\begin{align}\nonumber
    p_t(\yb|\xb_t) &= p\left(\yb = \frac{1}{\sqrt{\alphabart}}\Cb\xb_t - \frac{\sqrt{1-\alphabart}}{\sqrt{\alphabart}}\Cb\epsb + \zb | \xb_t\right),\\ \nonumber
    &\approx p\left(\zb = \yb - \frac{1}{\sqrt{\alphabart}}\Cb\xb_t |\xb_t\right) \quad\quad t<T_0,\\ \nonumber
    &= p\left(\zb = \yb - \frac{1}{\sqrt{\alphabart}}\Cb\xb_t\right) = \No\left(\yb; \frac{1}{\sqrt{\alphabart}}\Cb\xb_t, \sigma_{z}^2\Ib\right),
\end{align}
where \(T_0\) is the highest time step at which this approximation performs well. The final expression follows from the assumption that the measurement noise \(\zb\) and the diffusion noise \(\epsb\) are independent, which is a reasonable assumption because measurement plays no role in the creation of the diffusion noise.
With this assumption, the score function will be 
\begin{equation}\label{eq.guidance_firstterm}
    \nabla_{\xb_t}\log{p_t(\yb|\xb_t)} = \frac{1}{\sigma_{z}^2\sqrt{\alphabart}}\Cb^T\left(\yb - \frac{1}{\sqrt{\alphabart}}\Cb\xb_{t}\right)\quad\quad t<T_0.
\end{equation}

In order to compute the guidance term for \(T>t>T_0\), we use the approach in  \cite{song_pseudoinverse-guided_2022} in which the guidance term is approximated with one-step denoising. To elaborate, note that the following equation holds: 
\begin{align}\label{eq.integral}
    p_t(\yb|\xb_t) &= \int p(\yb|\xb_0,\xb_t)p(\xb_0|\xb_t)d\xb_0 \\ \nonumber &= \int p(\yb|\xb_0)p(\xb_0|\xb_t)d\xb_0.
\end{align}
As suggested in \cite{song_pseudoinverse-guided_2022}, \(p(\xb_0|\xb_t)\) can be approximated by the one step denoising that estimates \(\xb_0\) from \(\xb_t\) and to account for the randomness in denoising process, \(p(\xb_0|\xb_t)\) is modeled as a Gaussian distribution around the estimated \(\xb_0\). In other words
\begin{equation}\label{eq.approx}
    p(\xb_0|\xb_t) \sim \No(\hat{\xb}_{0}^{t}, r_t^2\Ib),
\end{equation}
where \(\hat{\xb}_{0}^{t}\) is the estimate of the \(\xb_0\) from the given \(\xb_t\) obtained from the denoising mechanism of the diffusion models and \(r_t\) controls the variance of the approximate distribution. 
Using (\ref{eq.integral}) and (\ref{eq.approx}), it can be shown that 
\begin{equation}
    p_t(\yb|\xb_t) \sim \No(\Cb\xbzt, r_t^2\Cb\Cb^T + \sigma_{z}^2\Ib).
\end{equation}
The score function then follows as
\begin{equation}\label{eq.guidance_secondterm}
    \nabla_{\xb_t}\log{p_t(\yb|\xb_t)} \approx (\frac{\partial\xbzt}{\partial\xb_t})^T\Cb^T (r_t^2\Cb\Cb^T+\sigma_{z}^2\Ib)^{-1}(\yb-\Cb\xbzt).
\end{equation}
Putting this altogether, the proposed guidance term will be:
\begin{align}
    &\nabla_{\xb_t}\log{p_t(\xb_t|\yb)} = \nabla_{\xb_t}\log{p_t(\xb_t)} \\&+  \begin{cases}
 \frac{1}{\sigma_{z}^2\sqrt{\alphabart}}\Cb^T(\yb - \frac{1}{\sqrt{\alphabart}}\Cb\xb_{t}) \quad \quad \text{if } t <T_0, \\
 (\frac{\partial\xbzt}{\partial\xb_t})^T\Cb^T (r_t^2\Cb\Cb^T+\sigma_{z}^2\Ib)^{-1}(\yb-\Cb\xbzt), &  \textrm{o.w},
\end{cases}
\end{align}
\blue{where $r_t$ is selected as in \cite{song_pseudoinverse-guided_2022}}

\begin{wrapfigure}{r}{0.4\textwidth}
    \centering
    \vspace{-20pt} 
    \includegraphics[width=0.4\textwidth]{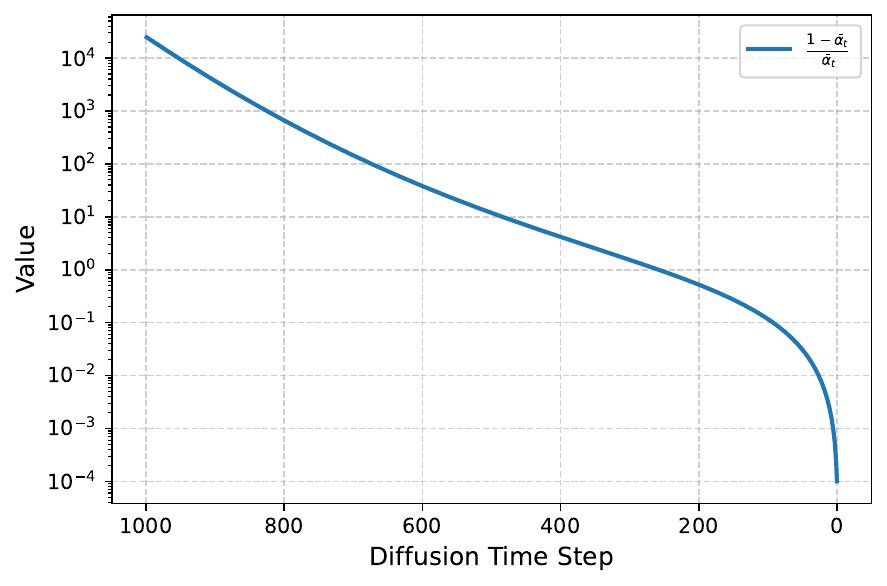}
    \vspace{-20pt} 
    \caption{\blue{Behavior of the coefficient \(\frac{1-\bar{\alpha}_t}{\bar{\alpha}_t}\) across diffusion time steps, shown on a logarithmic scale. 
As characterized by Theorems~\ref{th.1} and~\ref{th.2}, the rapid decay of this coefficient at lower diffusion time steps 
reduces the KL divergence between the true and approximated problem-specific score functions, thereby supporting the 
validity of the approximation in this regime.}}
    \label{fig.coef}
\end{wrapfigure}

Algorithm \ref{alg1} outlines the complete step-by-step implementation of the proposed method. It reconstructs the original data from the given measurement matrix 
\(\Cb\) and the observed measurement \(\yb\). The key distinction between this approach and \pgdm lies in the use of a conditional statement that enables piecewise guidance based on the diffusion time step. Unlike \(\Pi\)GDM, which uses the same formula to compute the problem-specific score function across all diffusion time steps, the proposed method employs a computationally simpler function at lower time steps, leading to accelerated inference at those time steps.

The computational advantage of using a piecewise guidance function can be understood by observing (\ref{eq.guidance_secondterm}), which involves the computation of a vector-Jacobian product. This requires taking derivatives of the denoiser's output with respect to its input, a computationally expensive operation. Additionally, this expression includes the inversion of a matrix, which further increases the computational cost. In contrast to \cite{song_pseudoinverse-guided_2022}, which uses these computationally expensive operations across all the diffusion steps, our piecewise guidance approach substitutes such operations with the simpler form given in (\ref{eq.guidance_firstterm}) for certain time steps, reducing the computational cost. Using a piecewise function to approximate the score function takes advantage of the different levels of diffusion noise at each time step to accelerate sampling in the final diffusion steps. It offers a reduction in inference time while maintaining performance. The proposed method requires specifying the parameter \(T_0\). Next, we provide preliminary insights into how \(T_0\) influences performance, offering an intuitive understanding of its selection.

\begin{algorithm}[t]
	\caption{Posterior sampling for inverse problems via piecewise guidance}\label{alg:alg1}

	\begin{algorithmic}
		\State \textbf{Inputs:} \(\yb \in \Rm\), \(\Cb \in \Rmn\), \(\sigma_z\), \(\eta \in [0,1]\), diffusion-based noise predictor model \(\Db\), \(r_t\) (estimation noise sequence) \(T_0\), \(k_1\) and \(k_2\) (for tuning).
		\State \textbf{Initialize:} \(\xb \sim \No(\boldsymbol{0}, \Ib)\)
		\For{\(t = N\) to \(1\)}\\
		\(\hat{\epsb} \gets \Db(\xb,t)\)\hspace{24pt}\Comment{Predicts the added noise at time step \(t\)}\\
		\(\hat{\xb} \gets \frac{\xb - \sqrt{1-\alphabart}}{\sqrt{\alphabart}}\)\\
		\(c_1 \gets \eta \sqrt{(1-\frac{\alphabart}{\alphabartt})\frac{1-\alphabartt}{1-\alphabart}}\)\hspace{14pt}\Comment{\(c_1, c_2:\) DDIM coeficients}\\
		\(c_2 \gets \sqrt{1 - \alphabartt - c^2_1}\)
		\If{\(i < T_0\)}
		\State \(\gb \gets k_1\frac{1}{\sigma_{z}^2\sqrt{\alphabart}}\Cb^T(\yb - \Cb\frac{\xb}{\sqrt{\alphabart}})\)
		\Else
		\State \(\gb \gets k_2(\frac{\partial\hat{\xb}}{\partial\xb})^T\Cb^T (r_t^2\Cb\Cb^T+\sigma_{z}^2\Ib)^{-1}(\yb-\Cb\hat{\xb}) \)
		\EndIf
		\State \(\epsb \sim \No(\boldsymbol{0}, \Ib)\)
		\State \(\xb \gets \sqrt{\alphabartt} + c_1 \epsb + c_2 \hat{\epsb} + \sqrt{\alphabart}\gb\)
		\EndFor
	\end{algorithmic}
	\label{alg1}
\end{algorithm}

\begin{table}[t]
\centering
\caption{High-level comparison of the computational operations required to compute the guidance term.}
\label{tab:guidance_complexity}
\begin{tabular}{lcc}
\toprule
\textbf{Method} & \textbf{Steps $t > T_0$} & \textbf{Steps $t \leq T_0$} \\
\midrule
\pgdm 
& \textcolor{teal}{matrix multiplication} + \textcolor{teal}{backpropagation} 
& \textcolor{teal}{matrix multiplication} + \textcolor{purple}{backpropagation} \\
\addlinespace
Ours 
& \textcolor{teal}{matrix multiplication} + \textcolor{teal}{backpropagation} 
& \textcolor{teal}{matrix multiplication} \\
\bottomrule
\end{tabular}
\end{table}

\begin{figure}[t]
    \centering
    \includegraphics[width=\nwidth\linewidth]{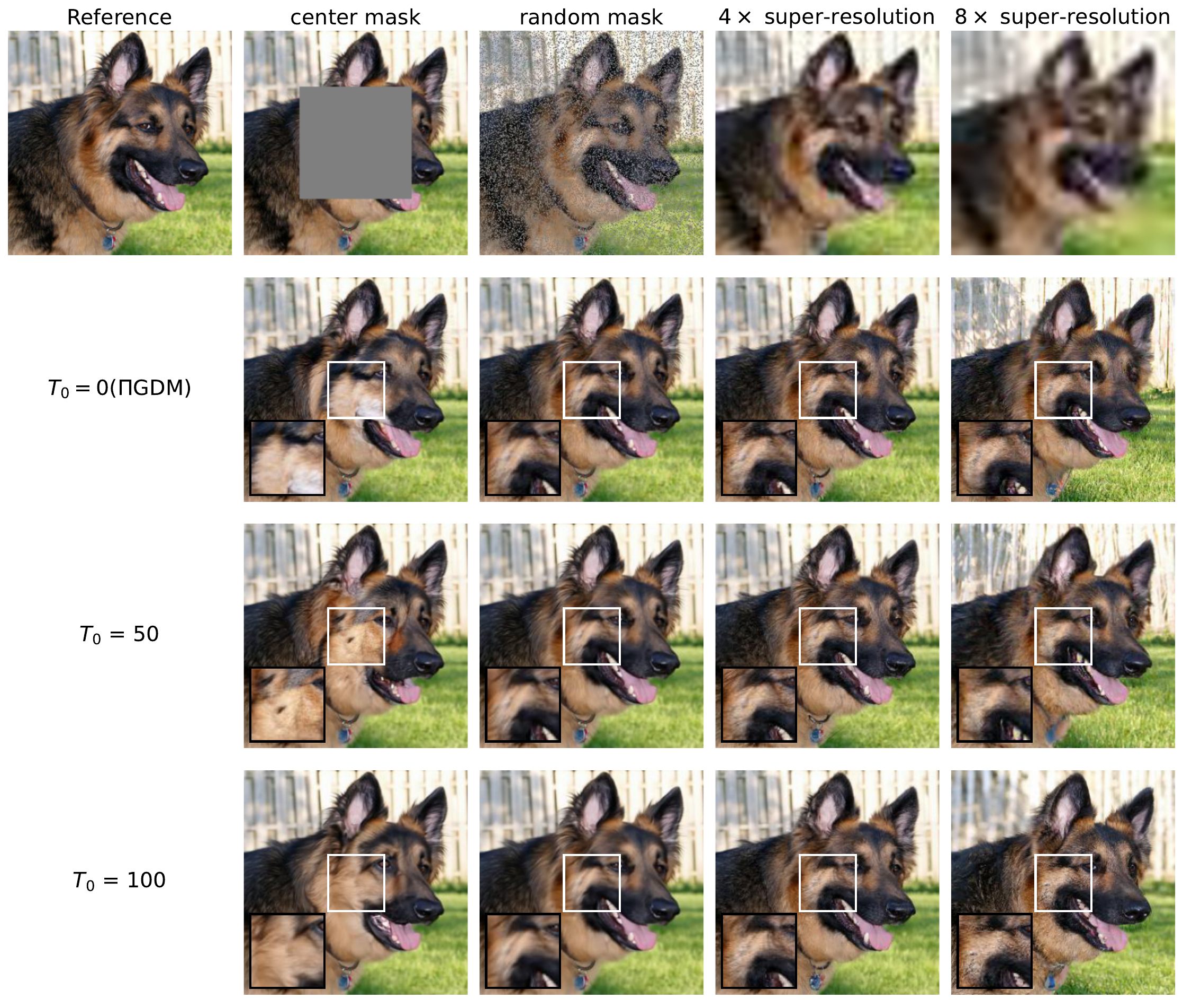}
    \caption{Restoration results on four inverse problems, inpainting with center mask (second column from left), inpainting with random mask, covering \(30\%\) of the pixels (third column form left), \(4\times\) super-resolution (fourth column from left) and \(8\times\) super-resolution (the fifth column from left) for different values of the \(T_0\) using an image of a dog using \textbf{class conditional} diffusion model. \(T_0\) controls when each component of the piece-wise guidance term is active. \pgdm  corresponds to \(T_0 = 0\). }
    \label{fig.1}
\end{figure}

\begin{figure}[t]
    \centering
    \includegraphics[width=\nwidth\linewidth]{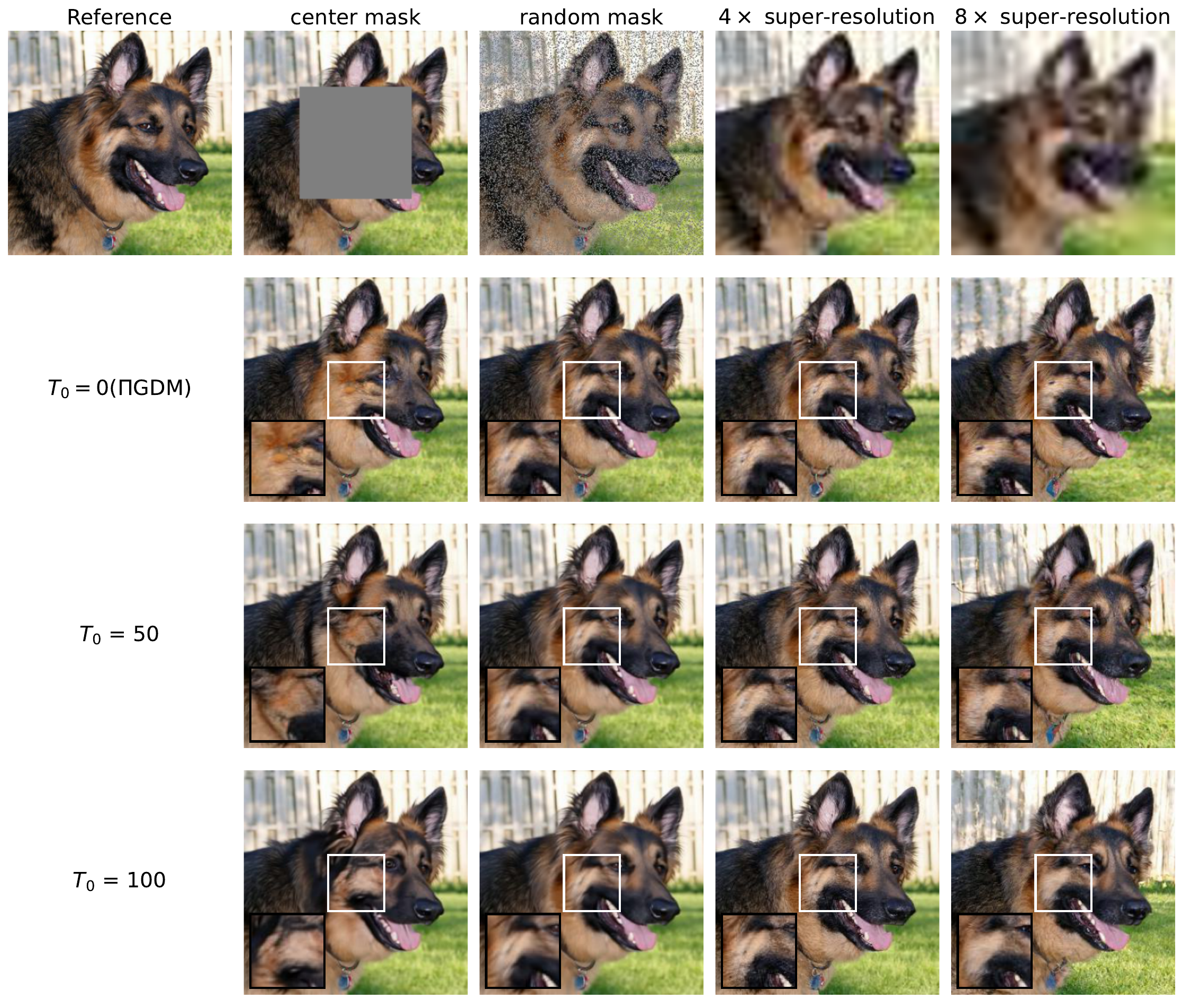}
    \caption{Restoration results on four inverse problems, inpainting with center mask (second column from left), inpainting with random mask, covering \(30\%\) of the pixels (third column form left), \(4\times\) super-resolution (fourth column from left) and \(8\times\) super-resolution (the fifth column from left) for different values of the \(T_0\) using an image of a dog using \textbf{class unconditional} diffusion model. \(T_0\) controls when each component of the piece-wise guidance term is active. \pgdm  corresponds to \(T_0 = 0\).}
    \label{fig.2}
\end{figure}

\begin{figure}[t]
    \centering
    \includegraphics[width=\nwidth\linewidth]{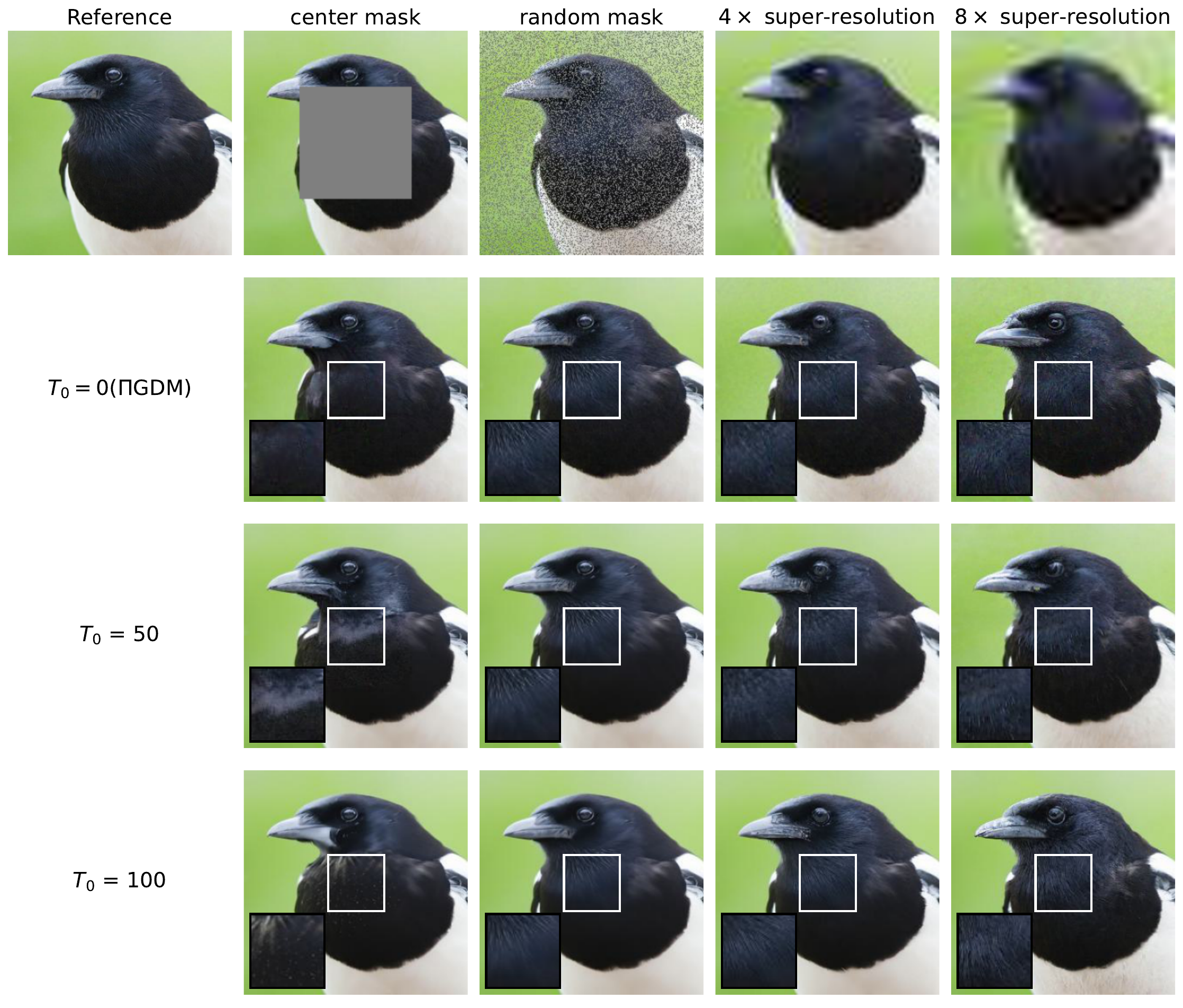}
    \caption{Restoration results on four inverse problems, inpainting with center mask (second column from left), inpainting with random mask, covering \(30\%\) of the pixels (third column form left), \(4\times\) super-resolution (fourth column from left) and \(8\times\) super-resolution (the fifth column from left) for different values of the \(T_0\) using an image of a magpie using \textbf{class conditional} diffusion model. \(T_0\) controls when each component of the piece-wise guidance term is active. \pgdm  corresponds to \(T_0 = 0\).}
    \label{fig.3}
\end{figure}

\begin{figure}[t]
    \centering
    \includegraphics[width=\nwidth\linewidth]{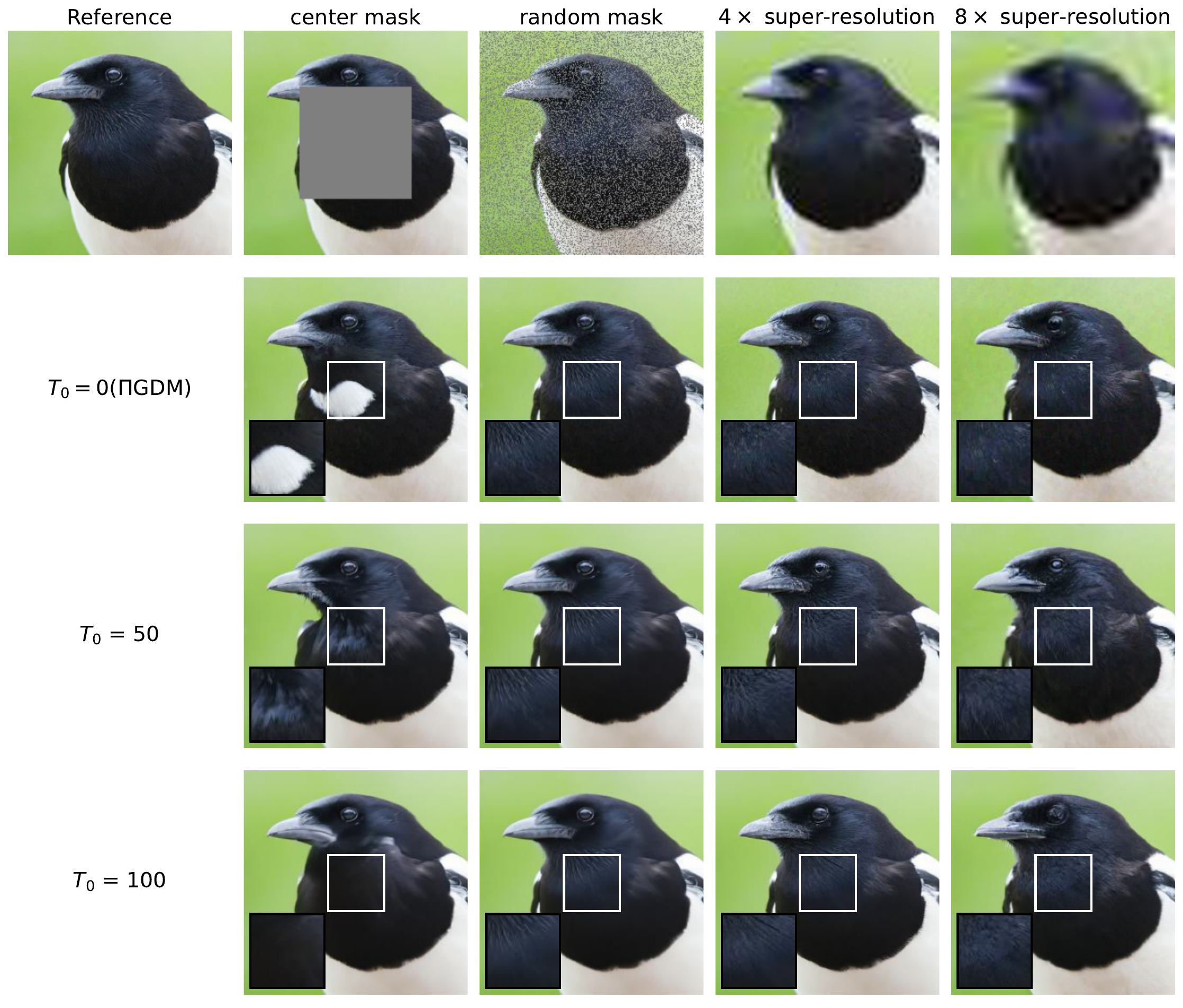}
    \caption{Restoration results on four inverse problems, inpainting with center mask (second column from left), inpainting with random mask, covering \(30\%\) of the pixels (third column form left), \(4\times\) super-resolution (fourth column from left) and \(8\times\) super-resolution (the fifth column from left) for different values of the \(T_0\) using an image of a magpie using \textbf{class unconditional} diffusion model. \(T_0\) controls when each component of the piece-wise guidance term is active. \pgdm  corresponds to \(T_0 = 0\).}
    \label{fig.4}
\end{figure}

\section{Theoretical Analysis of Piecewise Score Estimation}\label{sec.Theory}
In this section, we present a theoretical analysis of the proposed method by examining the relationship between the true conditional distribution \blue{under the idealized setting of knowing the diffusion noise at each step} and its proposed approximation at lower time steps. We derive explicit expressions for the differences between these distributions, as well as for the differences between their corresponding guidance terms. This analysis provides insight into the approximation’s effectiveness, identifies the factors influencing its accuracy, and offers a quantitative basis for selecting \(T_0\).

To formalize this, we begin by considering the Kullback–Leibler (KL) divergence between \blue{the noise conditioned true distribution \(p_0(\yb|\xb_t,\vbo_t)\)} and its proposed approximation \(p_a(\yb|\xb_t)\) under the assumption that the noise term becomes negligible for lower time steps, i.e., \(\frac{\sqrt{1-\alphabart}}{\sqrt{\alphabart}}\Cb\epsb \approx \boldsymbol{0} \; \forall t<T_0 \). We additionally analyze the deviation between the true and approximated guidance vectors, resulting from these distributions. The following theorem provides a closed-form expression for these quantities.

\begin{theorem}\label{th.1}
Suppose the value of the added noise for the latent diffusion variable at time step \(t\) is \(\vbo_t \in \Rn\), i.e., \(\xb_t = \sqrt{\alphabart}\xb_0 + \sqrt{1-\alphabart}\vbo_t\), then the value of the KL divergence between the \blue{noise conditioned true distribution \(p_0(\yb|\xb_t, \vbo_t)\)} and the approximated version \(p_a(\yb|\xb_t)\) is \(\frac{1}{2\sigma_z^2} \frac{1-\alphabart}{\alphabart}\ntwo{\Cb\vbo_t}^2\). Further more the $\ell_2$-norm difference between the guidance terms corresponding to these distributions is \(\frac{\sqrt{1-\alphabart}}{{\alphabart}}\ntwo{\Cb^T\Cb\vbo_t}\).
\end{theorem}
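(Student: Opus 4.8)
The plan is to observe that, once $\xb_t$ and the realized diffusion-noise value $\vb_t$ are fixed, both the exact conditional $p_0(\yb|\xb_t)$ and its piecewise approximation $p_a(\yb|\xb_t)$ are Gaussians over $\yb\in\Rm$ with the \emph{same} covariance $\sigma_z^2\Ib$, so the KL divergence reduces to a single Mahalanobis term in the gap between their means. First I would note that, since the forward relation $\xb_t = \sqrt{\alphabart}\,\xb_0 + \sqrt{1-\alphabart}\,\vb_t$ of Section~\ref{sec.problem} makes $\xb_0$ a deterministic function of $(\xb_t,\vb_t)$, the measurement model \eqqref{eq.inverse} together with $\zb\sim\No(\boldsymbol 0,\sigma_z^2\Ib)$ independent of $\xb_t,\vb_t$ gives $p_0(\yb|\xb_t) = \No(\yb;\,\Cb\xb_0,\,\sigma_z^2\Ib)$, while the approximation derived before \eqqref{eq.guidance_firstterm} is $p_a(\yb|\xb_t) = \No\!\big(\yb;\,\tfrac{1}{\sqrt{\alphabart}}\Cb\xb_t,\,\sigma_z^2\Ib\big)$. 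Rearranging the forward relation as $\xb_0 = \tfrac{1}{\sqrt{\alphabart}}(\xb_t - \sqrt{1-\alphabart}\,\vb_t)$ shows that the two means differ by exactly
\[
  \Cb\xb_0 - \frac{1}{\sqrt{\alphabart}}\Cb\xb_t \;=\; -\,\frac{\sqrt{1-\alphabart}}{\sqrt{\alphabart}}\,\Cb\vb_t .
\]

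Second, I would apply the closed-form KL divergence between two multivariate Gaussians $\No(\mub_1,\Sigmab)$, $\No(\mub_2,\Sigmab)$ with a common covariance: the log-determinant terms cancel and $\trace(\Sigmab^{-1}\Sigmab)=m$ cancels the dimension term, so $\KL = \tfrac12(\mub_1-\mub_2)^T\Sigmab^{-1}(\mub_1-\mub_2)$. With $\Sigmab^{-1} = \sigma_z^{-2}\Ib$ and the mean gap computed above, this yields
\[
  \KL\big(p_0(\yb|\xb_t)\,\|\,p_a(\yb|\xb_t)\big) \;=\; \frac{1}{2\sigma_z^2}\,\frac{1-\alphabart}{\alphabart}\,\ntwo{\Cb\vb_t}^2 ,
\]
which is the claimed expression.

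Finally, a remark on where the (mild) obstacle lies: there is no hard analytic step, so the entire content is in framing the two distributions correctly, and in particular in reading $p_0(\yb|\xb_t)$ as the conditional given the \emph{realized} noise $\vb_t$ (equivalently, given $\xb_0$) rather than after marginalizing over it — it is this choice that keeps $p_0(\yb|\xb_t)$ a single Gaussian, and the independence of the measurement noise $\zb$ from the diffusion noise, already invoked just before \eqqref{eq.guidance_firstterm}, is what makes its covariance exactly $\sigma_z^2\Ib$ in both the true and approximate cases. Since the covariances coincide, the direction of the KL divergence does not affect the quadratic term, so no symmetry argument is needed; the only bookkeeping that remains is tracking the scalar $\sqrt{1-\alphabart}/\sqrt{\alphabart}$, whose square produces the factor $(1-\alphabart)/\alphabart$ in the final answer.
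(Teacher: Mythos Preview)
Your proposal is correct and follows essentially the same approach as the paper: both identify $p_0(\yb|\xb_t)$ and $p_a(\yb|\xb_t)$ as Gaussians in $\yb$ with common covariance $\sigma_z^2\Ib$ and means differing by $-\tfrac{\sqrt{1-\alphabart}}{\sqrt{\alphabart}}\Cb\vb_t$, then apply the closed-form Gaussian KL (the paper states this as a separate lemma, while you specialize it directly to the equal-covariance case). Your remark that conditioning on the realized $\vb_t$ is what keeps $p_0(\yb|\xb_t)$ a single Gaussian is exactly the interpretation the paper's proof uses implicitly.
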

\begin{proof}
To prove Theorem \ref{th.1}, we need the following lemma from \cite{hershey_approximating_2007}:
\begin{lemma}\label{lem.1}
If \(\Xb_1 \sim \No(\mub_1, \Sigmab_1) \in \Rn\) and \(\Xb_2 \sim \No(\mub_2, \Sigmab_2) \in \Rn\), then the KL divergence between the distribution of these two variables is given by 
\begin{equation}\nonumber
    \frac{1}{2}\left[\log\frac{|\Sigmab_2|}{|\Sigmab_1|} - n + \trace(\Sigmab_2^{-1}\Sigmab_1) + (\mub_2 - \mub_1)^T\Sigmab_2^{-1}(\mub_2-\mub_1)\right].
\end{equation}
\end{lemma}
The \blue{noise conditioned true distribution \(p_0(\yb|\xb_t, \vbo_t)\)} can be written as
\begin{align}\nonumber
    p_0(\yb|\xb_t,\vbo_t) &= p(\yb = \Cb\xb_0 + \zb|\xb_t = \sqrt{\alphabart}\xb_0 + \sqrt{1-\alphabart}\vbo_t)
     = p\left(\zb = \yb - \frac{1}{\sqrt{\alphabart}}\Cb\xb_t + \frac{\sqrt{1-\alphabart}}{\sqrt{\alphabart}}\Cb\vbo_t\right) \\ \nonumber
    &= \No\left(-\frac{\sqrt{1-\alphabart}}{\sqrt{\alphabart}}\Cb\vbo_t + \frac{1}{\sqrt{\alphabart}}\Cb\xb_t, \sigma_z^2\Ib\right),
\end{align}
where the first line follows from (\ref{eq.forward}) and (\ref{eq.inverse}), the second line follows from the assumption that at each time step \(t\), the corresponding true noise \(\vbo_t\) is known and the third line follows from the model assumption that the measurement noise follows \(\No(\boldsymbol{0}, \Ib\)).
For the proposed method, by the assumption of \(\frac{\sqrt{1-\alphabart}}{\sqrt{\alphabart}}\Cb\vbo_t = \boldsymbol{0}\) for low values of \(t\), the approximate conditional distribution \(p_a(\yb|\xb_t)\) can be written as 
\begin{align}\nonumber
    p_a(\yb|\xb_t) &= p(\yb = \Cb\xb_0 + \zb|\xb_t = \sqrt{\alphabart}\xb_0 + \sqrt{1-\alphabart}\vbo_t)\\ \nonumber
    & = p\left(\zb = \yb - \frac{1}{\sqrt{\alphabart}}\Cb\xb_t\right) = \No\left(\frac{1}{\sqrt{\alphabart}}\Cb\xb_t, \sigma_z^2\Ib\right). \\ \nonumber
\end{align}
Using Lemma \ref{lem.1}, we have
\begin{equation}\label{eq.th1}
    \KL(p_0(\yb|\xb_t,\vbo_t) \,\|\, p_a(\yb|\xb_t)) = \frac{1}{2\sigma_z^2} \frac{1-\alphabart}{\alphabart}\ntwo{\Cb\vbo_t}^2.
\end{equation}

To compute the $\ell_2$ norm of the difference between the guidance terms, recall that the guidance term is defined as $\nabla_{\xb_t}\log p(\yb|\xb_t)$. The $\ell_2$ norm of the difference between the guidance computed under the ideal setting, where the diffusion noise is known at each diffusion step and approximation is then given by
\begin{align}
    d  = \ntwo{\nabla_{\xb_t}\log{p_0(\yb|\xb_t,\vbo_t)} - \nabla_{\xb_t}\log{p_a(\yb|\xb_t)}} 
\end{align}
Since $p_0(\yb|\xb_t,\vbo_t) = \No\left(-\frac{\sqrt{1-\alphabart}}{\sqrt{\alphabart}}\Cb\vbo_t + \frac{1}{\sqrt{\alphabart}}\Cb\xb_t, \sigma_z^2\Ib\right)$, and $p_a(\yb|\xb_t) = \No\left(\frac{1}{\sqrt{\alphabart}}\Cb\xb_t, \sigma_z^2\Ib\right)$, we have
\begin{align}\label{eq.th1.1}
    d &= \ntwo{-\frac{1}{\sqrt{\alphabart}}\Cb^T\left(\yb - \frac{1}{\sqrt{\alphabart}}\Cb\xb_t + \frac{\sqrt{1-\alphabart}}{\sqrt{\alphabart}}\Cb\vbo_t  \right)+\frac{1}{\sqrt{\alphabart}}\Cb^T\left(\yb - \frac{1}{\sqrt{\alphabart}}\Cb\xb_t\right)}\\ \nonumber
    & = \ntwo{-\frac{1}{\sqrt{\alphabart}}\Cb^T\left(\frac{\sqrt{1-\alphabart}}{\sqrt{\alphabart}}\Cb\vbo_t\right)} = \frac{\sqrt{1-\alphabart}}{{\alphabart}}\ntwo{\Cb^T\Cb\vbo_t} 
\end{align}

\end{proof}

This theorem provides insight into the performance of the proposed approximation. According to Theorem~\ref{th.1}, the KL divergence between the approximate and the true noise conditional distributions, as well as the difference between the corresponding guidance terms at each time step \(t\), are influenced by the parameter \(\alphabart\) and the degradation matrix \(\Cb\). When \(\alphabart\) is close to one, these quantities become negligible, indicating that the approximation closely matches the true distribution. Furthermore, (\ref{eq.th1}) and (\ref{eq.th1.1}) highlight the role of the degradation matrix \(\Cb\) in determining the quality of the approximation, as it directly affects both the KL divergence and the guidance difference.

Next, we derive an expression for the KL divergence between the true noise conditioned distribution \(p_0(\yb|\xb_t,\vbo_t)\) and its approximation \(p_{a\prime}(\yb|\xb_t)\), as well as for the deviation between their corresponding guidance vectors. The approximate distribution is obtained by estimating the added noise using a denoiser.

\begin{theorem}\label{th.2}
    Suppose that the added noise for the latent diffusion variable at time step \(t\) is denoted by \(\vbo_t \in \Rn\), i.e., \(\xb_t = \sqrt{\alphabart}\xb_0 + \sqrt{1-\alphabart}\vbo_t\). Let \(\hat{\vbo}_t = \Db(\xb_t, t)\) be the estimate of the true noise \(\vbo_t\), obtained by the denoiser \(\Db\) from \(\xb_t\) and the time step \(t\). Assume that the denoiser output at each time step \(t\) is centered around the true noise with an additive zero-mean random error, i.e., \(\hat{\vbo}_t = \vbo_t + \epsb_t\), where \(\E[\epsb_t] = \boldsymbol{0}\). Then, the KL divergence between the true noise conditioned distribution \(p_0(\yb|\xb_t,\vbo_t)\) and its approximation \(p_{a\prime}(\yb|\xb_t)\) is given by
    \(
        \frac{1}{2\sigma_z^2} \frac{1-\alphabart}{\alphabart} \ntwo{\Cb\epsb_t}^2.
    \)
    Furthermore, the $\ell_2$-norm difference between the guidance terms corresponding to these distributions is
    \(
        \frac{\sqrt{1-\alphabart}}{\alphabart} \ntwo{\Cb^T\Cb\epsb_t}.
    \)
\end{theorem}

\begin{proof}
    Similar to the proof of Theorem~\ref{th.1}, for the true conditional distribution we have
    \begin{equation}
        p_0(\yb|\xb_t,\vbo_t) = \No\left(-\frac{\sqrt{1-\alphabart}}{\sqrt{\alphabart}}\Cb\vbo_t + \frac{1}{\sqrt{\alphabart}}\Cb\xb_t, \sigma_z^2\Ib\right).
    \end{equation}
    For the approximate distribution \(p_{a\prime}(\yb|\xb_t)\), we obtain
    \begin{equation}
        p_{a\prime}(\yb|\xb_t)
        = p\left(\zb = \yb - \frac{1}{\sqrt{\alphabart}}\Cb\xb_t + \frac{\sqrt{1-\alphabart}}{\sqrt{\alphabart}}\Cb\hat{\vbo}_t\right) 
        = \No\left(-\frac{\sqrt{1-\alphabart}}{\sqrt{\alphabart}}\Cb\hat{\vbo}_t + \frac{1}{\sqrt{\alphabart}}\Cb\xb_t, \sigma_z^2\Ib\right).
    \end{equation}
    Then, by applying Lemma~\ref{lem.1}, we obtain
    \begin{equation}\label{eq.KL_thm2}
        \KL(p_0(\yb|\xb_t,\vbo_t) \,\|\, p_{a\prime}(\yb|\xb_t))
        = \frac{1}{2\sigma_z^2} \frac{1-\alphabart}{\alphabart}\ntwo{\Cb\epsb_t}^2.
    \end{equation}

    To compute the $\ell_2$ norm of the difference between the corresponding guidance terms, we proceed analogously to Theorem~\ref{th.1}:
    \begin{align}
        d
        &= \ntwo{\nabla_{\xb_t}\log p_0(\yb|\xb_t,\vbo_t) - \nabla_{\xb_t}\log p_{a\prime}(\yb|\xb_t)} \\ \nonumber
        &= \ntwo{-\frac{1}{\sqrt{\alphabart}}\Cb^T\left(\yb - \frac{1}{\sqrt{\alphabart}}\Cb\xb_t + \frac{\sqrt{1-\alphabart}}{\sqrt{\alphabart}}\Cb\vbo_t \right)
        + \frac{1}{\sqrt{\alphabart}}\Cb^T\left(\yb - \frac{1}{\sqrt{\alphabart}}\Cb\xb_t + \frac{\sqrt{1-\alphabart}}{\sqrt{\alphabart}}\Cb\hat{\vbo}_t \right)} \\ \nonumber
        &= \frac{\sqrt{1-\alphabart}}{\alphabart}\ntwo{\Cb^T\Cb(\hat{\vbo}_t - \vbo_t)} = \frac{\sqrt{1-\alphabart}}{\alphabart}\ntwo{\Cb^T\Cb\epsb_t}.
    \end{align}
\end{proof}

Theorem~\ref{th.2} shows that the KL divergence and the $\ell_2$-norm of the difference between the guidance terms can again remain small for lower values of \(t\), since \(\alphabart \approx 1\) in those time steps. However, depending on its performance, employing a denoiser can provide improved accuracy at the cost of increased computational effort. Specifically, according to Theorem~\ref{th.2}, the terms \(\ntwo{\Cb\epsb_t}^2\) and \(\ntwo{\Cb^T\Cb\epsb_t}\) decrease when the denoiser performs well and its estimation error \(\epsb_t\) is close to zero. Nonetheless, this noise estimation step increases the computational complexity of the denoising process.

Theorem~\ref{th.1} provides the foundation for analyzing the choice of the parameter \(T_0\), which determines when each component of the piecewise function becomes active. The following theorem offers a quantitative criterion for selecting \(T_0\) based on the tolerance of the error in the guidance term difference computed from the true noise conditioned distribution and it's approximate obtained under the assumption that \(\frac{\sqrt{1-\alphabart}}{\sqrt{\alphabart}}\Cb\vbo_t = \boldsymbol{0}\) for \(t \leq T_0\).

\begin{theorem}\label{th.3}
    Suppose that the added noise for the latent diffusion variable at time step \(t\) is denoted by \(\vbo_t \in \Rn\), i.e., \(\xb_t = \sqrt{\alphabart}\xb_0 + \sqrt{1-\alphabart}\vbo_t\). Let \(p_0(\yb|\xb_t,\vbo_t)\) denote the true noise conditioned distribution and \(p_a(\yb|\xb_t)\) its approximate counterpart, under the assumption that \(\frac{\sqrt{1-\alphabart}}{\sqrt{\alphabart}}\Cb\vbo_t = \boldsymbol{0}\). Then, for all time steps \(t \leq T_0\) satisfying \(\bar{\alpha}_{T_0} \geq \frac{-1 + \sqrt{1 + 4\delta}}{2\delta}\), where \(\delta = \frac{\epsilon^2}{\nfro{\Cb^T\Cb}}\), the expected $\ell_2$-norm of the difference between the guidance terms corresponding to the true and approximate conditional distributions is bounded by \(\epsilon\); that is,
    \begin{equation}\nonumber
        \E[d] = \E\left[\ntwo{\nabla_{\xb_t}\log p_0(\yb|\xb_t,\vbo_t) - \nabla_{\xb_t}\log p_a(\yb|\xb_t)}\right] \leq \epsilon.
    \end{equation}
\end{theorem}

\begin{proof}
    To prove this theorem, by using the result of Theorem~\ref{th.1}, we have
    \begin{equation}\label{eq.thm.3.1}
        \E[d^2] = \frac{1-\alphabart}{\alphabart^2}\E\left[\ntwo{\Cb^T\Cb\vbo_t}^2\right] 
        = \frac{1-\alphabart}{\alphabart^2} \E\left[\vbo_t^T(\Cb^T\Cb)^2\vbo_t\right].
    \end{equation}
    Since, by the structure of the forward process, we have \(\vbo_t \sim \No(\boldsymbol{0}, \Ib)\), (\ref{eq.thm.3.1}) simplifies to
    \begin{equation}
        \E\left[\vbo_t^T(\Cb^T\Cb)^2\vbo_t\right] 
        = \trace\!\left((\Cb^T\Cb)^2\Ib\right) 
        = \nfro{\Cb^T\Cb}^2.
    \end{equation}
    Hence, by using Jensen's inequality we obtain
    \begin{equation}
        \E[d] \leq \sqrt{\E[d^2]} = \frac{\sqrt{1-\alphabart}}{\alphabart}\nfro{\Cb^T\Cb}.
    \end{equation}
    The condition \(\E[d] \leq \epsilon\) is guaranteed if 
    \(\frac{\sqrt{1-\alphabart}}{\alphabart}\nfro{\Cb^T\Cb} \leq \epsilon\).
    Solving this inequality for \(\alphabart\), we find that it holds when
    \[
        \alphabart \geq \frac{-1 + \sqrt{1 + 4\delta}}{2\delta},
        \quad \text{where} \quad \delta = \frac{\epsilon^2}{\nfro{\Cb^T\Cb}^2}.
    \]
    Since \(\alphabart\) decreases with respect to \(t\), the parameter \(T_0\) can be chosen as the first time step for which 
    \(\alphabart \geq \frac{-1 + \sqrt{1 + 4\delta}}{2\delta}\) holds, ensuring the condition is satisfied for all \(t \leq T_0\).
\end{proof}

\blue{Theorem~\ref{th.3} provides a quantitative criterion for selecting the parameter \(T_0\) given a prescribed tolerance \(\delta\).
This tolerance reflects the allowable error in the guidance term for the inverse problem and depends on the desired score function estimation accuracy \(\epsilon\)
as well as on the problem-specific degradation operator through the quantity \(\lVert \Cb^{\top}\Cb \rVert_F\).
As \(\delta\) increases for a $\Cb$, the admissible value of \(T_0\) increases accordingly, corresponding to a regime in which larger approximation
errors in the score function are acceptable. This, in turn, allows for larger choices of \(T_0\) and consequently reduces the
computational cost. The theorem thus provides a principled starting point for selecting an appropriate \(T_0\) once the degradation matrix of the problem \(\Cb\) and the tolerated guidance error are specified.}



\begin{figure}[htbp]
  \centering
  \subfloat[Class conditional]{
    \includegraphics[width=0.48\linewidth]{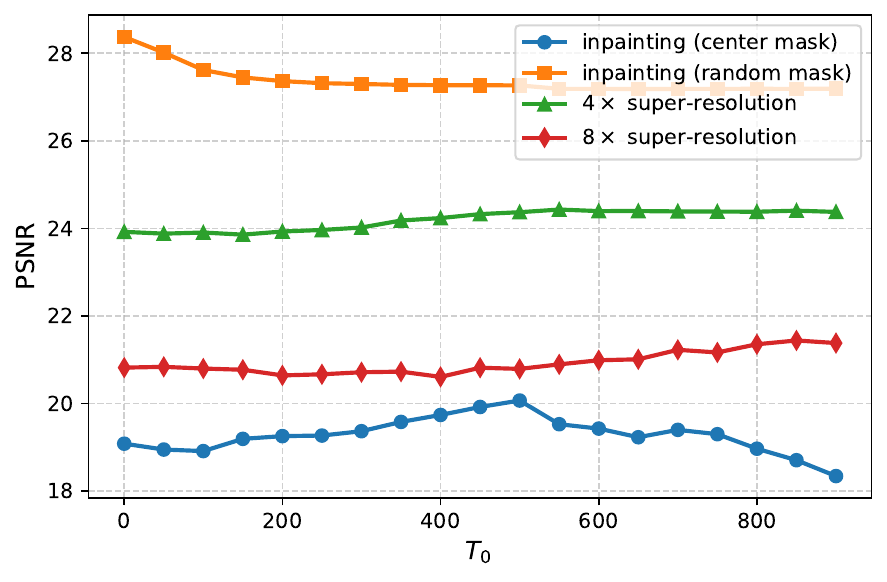}
    \label{fig.psnr.cond}
  }
  \hfill
  \subfloat[Class unconditional]{
    \includegraphics[width=0.48\linewidth]{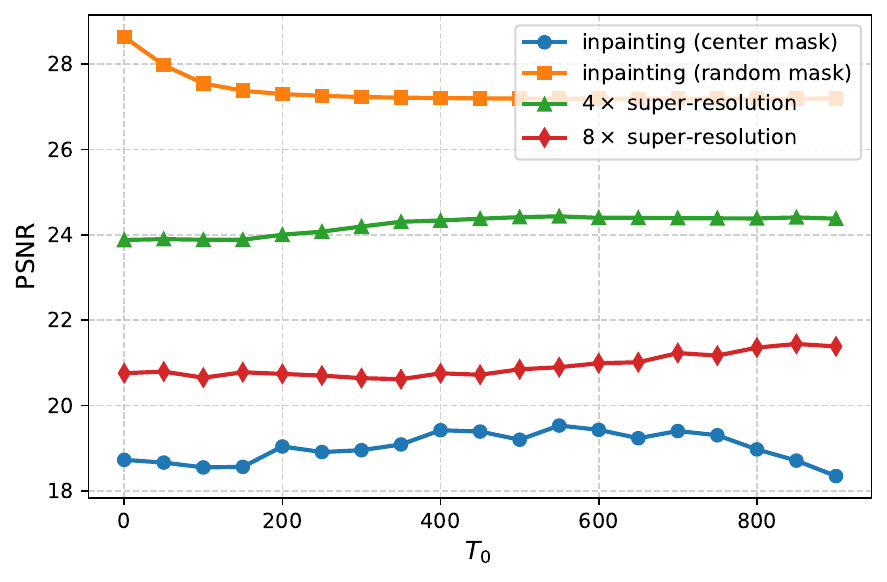}
    \label{fig.psnr.uncond}
  }
  \caption{Average PSNR scores across various inverse problems as a function of the guidance threshold \(T_0\), which controls when each component of the piece-wise guidance term is active. \pgdm  corresponds to \(T_0 = 0\).}
  \label{fig:psnr_all}
\end{figure}

\section{Simulation Results and Analysis}\label{sec.simulation}
We evaluate the effectiveness of the proposed method on two inverse problems: Image super-resolution and image inpainting. All simulations are conducted on \(256 \times 256\) images in alignment with standard settings adopted in prior work e.g., \cite{kawar_denoising_2022} and \cite{song_pseudoinverse-guided_2022}. 

\blue{For these simulations, we primarily use a subset of $50$ images selected from the ImageNet dataset~\cite{russakovsky_imagenet_2015}, which is widely used in image generation and restoration tasks. Unless otherwise stated, all reported results are obtained on this $50$-image subset. For a specific experiment, we additionally evaluate the method on a larger subset of $1000$ ImageNet images to assess scalability and robustness.} All the following simulations are done using an NVIDIA H200 GPU (140 GB) on a server with AMD EPYC 7742 CPU.

For the diffusion model, we rely on a publicly available pretrained diffusion model\footnote{\url{https://github.com/openai/guided-diffusion/blob/main/README.md}} \cite{dhariwal_diffusion_2021}, trained specifically on \(256 \times 256\) images. The model follows a standard denoising schedule with \(1000\) diffusion steps. Our method integrates this model into an inverse problem framework, and we compare its performance with \textrm{\(\Pi\)}GDM \cite{song_pseudoinverse-guided_2022}, a recently proposed problem-agnostic approach that has demonstrated strong empirical performance across various inverse problems, often rivaling task-specific methods.

\begin{figure}[htbp]
  \centering
  \subfloat[Class conditional]{%
    \includegraphics[width=0.48\linewidth]{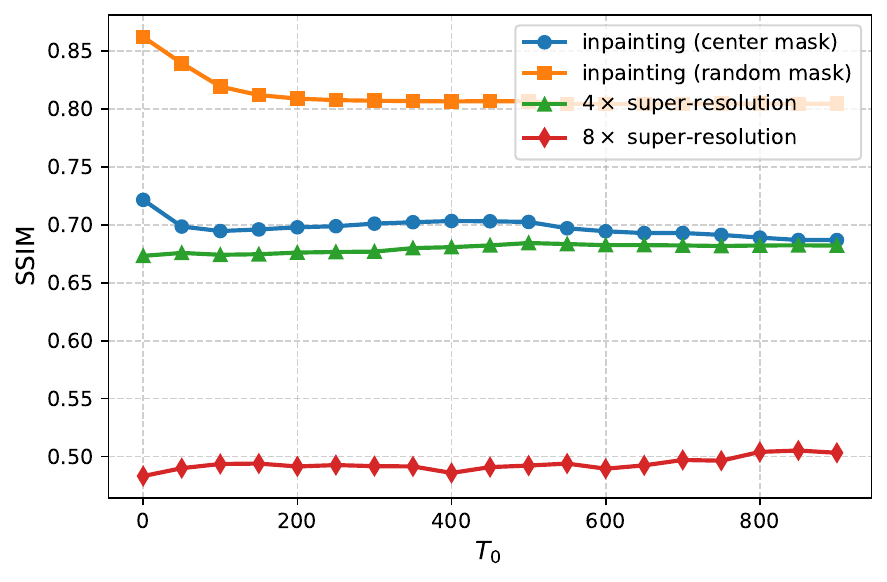}%
    \label{fig.ssim_cond}
  }
  \hfill
  \subfloat[Class unconditional]{%
    \includegraphics[width=0.48\linewidth]{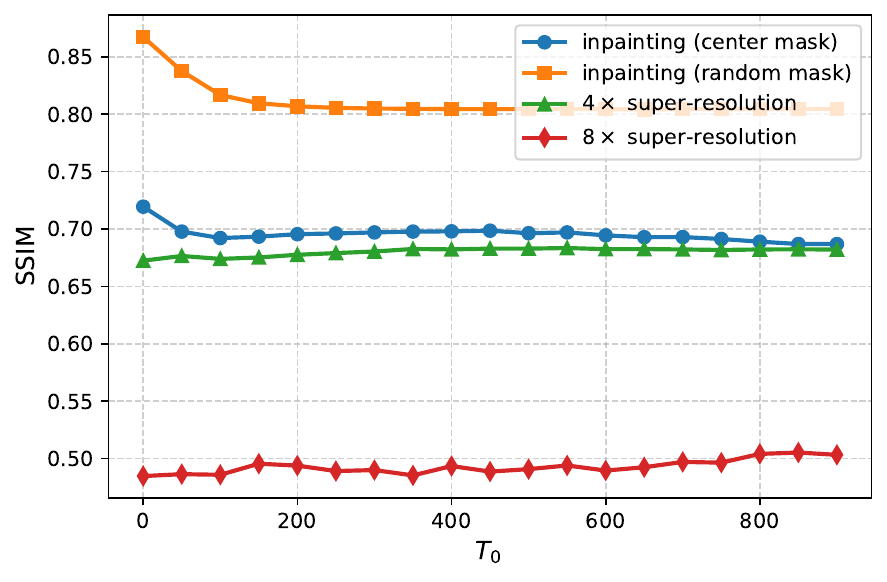}%
    \label{fig.ssim_uncond}
  }
  \caption{Average SSIM scores across various inverse problems as a function of the guidance threshold \(T_0\), which controls when each component of the piece-wise guidance term is active. \pgdm  corresponds to \(T_0 = 0\).}
  \label{fig:ssim_all}
\end{figure}

The simulations can be broadly categorized into two settings: class-conditional and class-unconditional. In the class-conditional case, it is assumed that the class label of the measurement image is known. A class-conditional diffusion model is employed, where the label is provided to the denoising process and incorporated during inference. In contrast, the class-unconditional case assumes that the class label is unknown at inference time; thus, the denoising process must operate solely based on the measurement without access to any class-specific information.

\begin{figure}[htbp]
  \centering
  \subfloat[Class conditional]{%
    \includegraphics[width=0.48\linewidth]{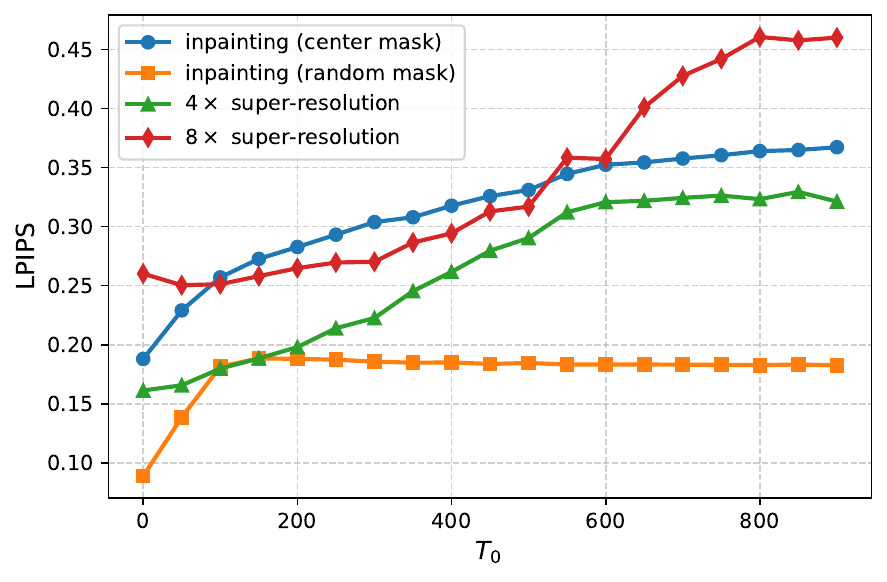}%
    \label{fig.lpips_cond}
  }
  \hfill
  \subfloat[Class unconditional]{%
    \includegraphics[width=0.48\linewidth]{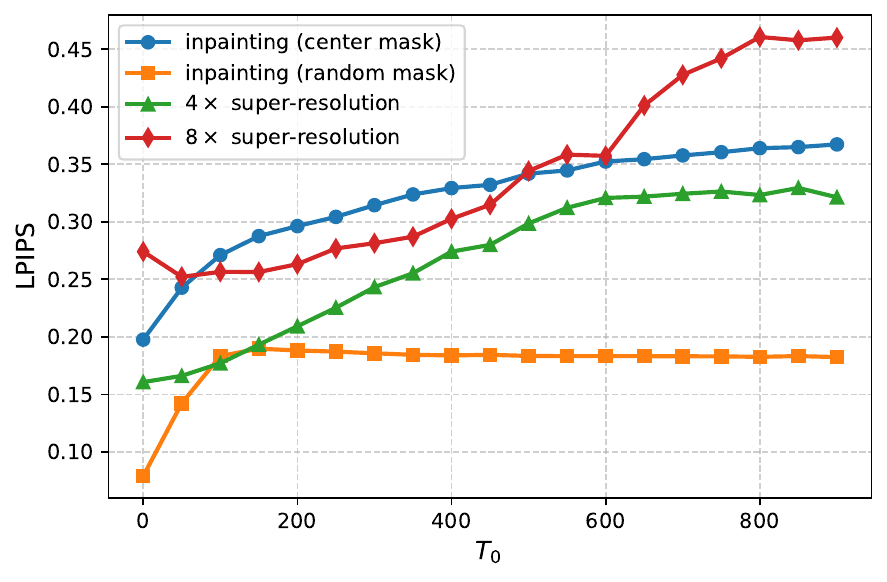}%
    \label{fig.lpips_uncond}
  }
  \caption{Average LPIPS scores across various inverse problems as a function of the guidance threshold \(T_0\), which controls when each component of the piece-wise guidance term is active. \pgdm  corresponds to \(T_0 = 0\).}
  \label{fig:lpips_all}
\end{figure}

For the inpainting task, two masking schemes are considered to assess robustness under structured and unstructured missing data. The first is a \emph{central block mask}, where a contiguous \(128 \times 128\) square region at the image center is missing. The second is a \emph{random mask}, in which \(30\%\) of the pixels are randomly removed throughout the image. In the super-resolution setting, the images are downsampled using average pooling with scaling factors \(4\) and \(8\), aiming to reconstruct the original high resolution images from their low resolution counterparts.

\begin{figure}[htbp]
  \centering
  \subfloat[Class conditional]{%
    \includegraphics[width=0.5\linewidth]{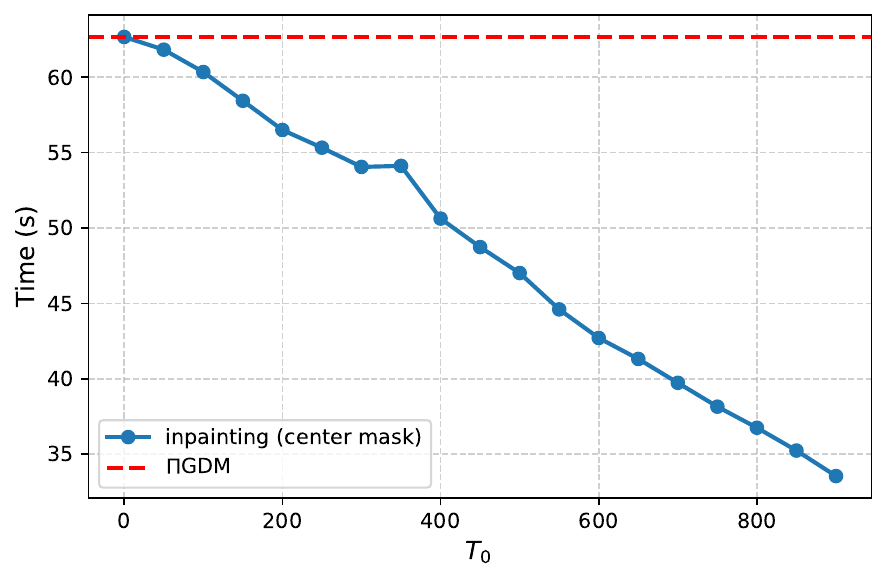}%
  }
  \hfill
  \subfloat[Class unconditional]{%
    \includegraphics[width=0.5\linewidth]{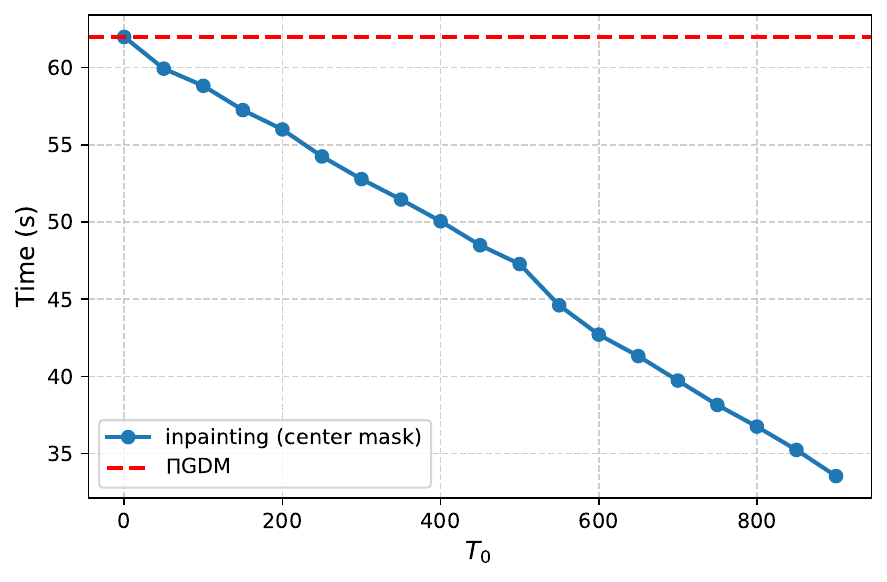}%
  }
  \caption{Average inference time per image as a function of the guidance threshold \(T_0\) for the inpainting task with a center mask. \(T_0\) controls when each component of the piece-wise guidance term is active. \pgdm  corresponds to \(T_0 = 0\).}
  \label{fig:time_inp1_all}
\end{figure}

We evaluate performance using both pixel-level and perceptual metrics. Specifically, we report PSNR, SSIM, and learned perceptual image patch similarity (LPIPS). PSNR is a standard measure of reconstruction fidelity based on the pixel-wise error. SSIM, measures perceptual similarity by considering aspects such as luminance and contrast to follow how humans perceive an image, with values closer to one indicating better structural fidelity. LPIPS is also a perceptual metric and it leverages deep neural network features to evaluate perceptual similarity, where lower scores imply higher perceptual quality.

\begin{table}[t]
\centering
\small
\setlength{\tabcolsep}{5pt}
\renewcommand{\arraystretch}{1.15}

\begin{threeparttable}
\caption{Quantitative comparison across four inverse problems. Higher is better for PSNR/SSIM; lower is better for LPIPS and inference time. \(T_0\) in our method is $200$}
\label{tab:invprob_results}

\begin{tabular}{l cc cc cc cc}
\toprule
\multirow{2}{*}{Inverse problem} &
\multicolumn{2}{c}{PSNR $\uparrow$} &
\multicolumn{2}{c}{SSIM $\uparrow$} &
\multicolumn{2}{c}{LPIPS $\downarrow$} &
\multicolumn{2}{c}{Time/img (s) $\downarrow$} \\
\cmidrule(lr){2-3}\cmidrule(lr){4-5}\cmidrule(lr){6-7}\cmidrule(lr){8-9}
& $\Pi$GDM & Ours & $\Pi$GDM & Ours & $\Pi$GDM & Ours & $\Pi$GDM & Ours \\
\midrule
Inpainting (center mask)  & 18.31 & 18.45 & 0.69 & 0.67 & 0.23 & 0.31 & 58.0 & 53.0 \\
Inpainting (random mask)  & 27.73 & 26.57 & 0.84 & 0.78 & 0.10 & 0.19 & 58.2 & 53.1 \\
Super-resolution $\times 4$ & 23.09 & 23.07 & 0.64 & 0.64 & 0.20 & 0.24 & 65.9 & 57.7 \\
Super-resolution $\times 8$ & 20.09 & 20.00 & 0.46 & 0.47 & 0.30 & 0.30 & 58.8 & 53.9 \\
\bottomrule
\end{tabular}

\end{threeparttable}
\end{table}

\blue{Figure~\ref{fig.coef} illustrates the behavior of the coefficient \(\frac{1-\bar{\alpha}_t}{\bar{\alpha}_t}\) across diffusion time steps. 
As established in Theorems~\ref{th.1} and~\ref{th.2}, the rapid decay of this coefficient at lower diffusion time-steps
leads to a reduction in the KL divergence between the true problem-specific score function and its approximation. 
This behavior explains why the approximation is particularly accurate at lower diffusion time-steps and motivates the piecewise treatment adopted in the proposed method.}


Figures~\ref{fig.1}--\ref{fig.4} present restored images across different values of \(T_0\), illustrating the visual quality achieved by the proposed method. Compared to \(\Pi\)GDM, the results show that high-quality reconstructions are maintained even as \(T_0\) increases, which enables faster restoration by reducing the number of computationally expensive steps.

Figures~\ref{fig.1} and \ref{fig.3} correspond to class-conditional models, while Figures~\ref{fig.2} and \ref{fig.4} show class-unconditional results. As expected, class conditioning improves visual quality due to the added label information, but the proposed method performs robustly in both settings. These findings highlight the efficiency and adaptability of the proposed method across a range of different inverse problems and diffusion model configuration.

\begin{figure}[htbp]
  \centering
  \subfloat[Class conditional]{%
    \includegraphics[width=0.5\linewidth]{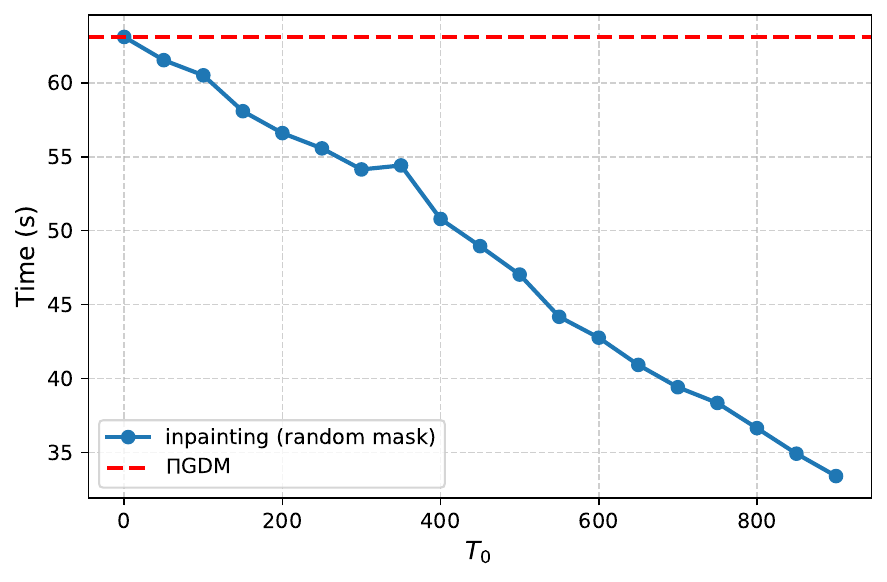}%
  }
  \hfill
  \subfloat[Class unconditional]{%
    \includegraphics[width=0.5\linewidth]{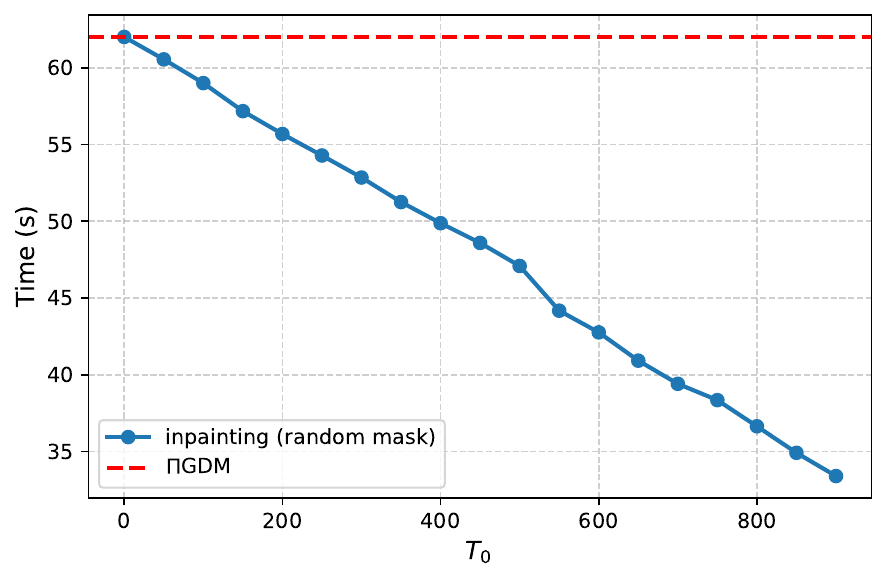}%
  }
  \caption{Average inference time per image as a function of the guidance threshold \(T_0\) for the inpainting task with a random mask. \(T_0\) controls when each component of the piece-wise guidance term is active. \pgdm  corresponds to \(T_0 = 0\).}
  \label{fig:time_inp2_all}
\end{figure}


\blue{Figure~\ref{fig:psnr_all} presents the PSNR values across different inverse problems for varying values of \(T_0\).
The reported results correspond to the average PSNR computed over all images in the dataset, with the baseline
\pgdm represented by the data points at \(T_0 = 0\).
The figure indicates that increasing \(T_0\) does not lead to significant variations in PSNR across the considered
inverse problems, suggesting that the reduced validity of the piecewise approximation at higher diffusion time
steps has a negligible impact on reconstruction fidelity as measured by PSNR.
A comparison between Fig.~\ref{fig.psnr.cond} and Fig.~\ref{fig.psnr.uncond} further shows that class-conditional
diffusion models achieve slightly higher PSNR values than their class-unconditional counterparts.}

Figure~\ref{fig:ssim_all} presents the SSIM metric for the same set of inverse problems evaluated at varying values of \(T_0\). Higher SSIM values indicate better perceptual quality. Similar to the PSNR trends, an increase in \(T_0\) does not lead to significant variations in SSIM across the considered inverse problems, showing that computational savings can be realized without significantly compromising the SSIM metric. Among the evaluated tasks, \(8\times\) super-resolution exhibits the weakest performance, reflecting its increased difficulty due to the higher number of missing pixels. A comparison between Figures~\ref{fig.ssim_cond} and \ref{fig.ssim_uncond} again confirms that class-conditional diffusion models slightly outperform class-unconditional ones in terms of perceptual similarity.

\begin{wrapfigure}{r}{0.4\textwidth}
    \centering
    \vspace{-20pt} 
    \includegraphics[width=0.40\textwidth]{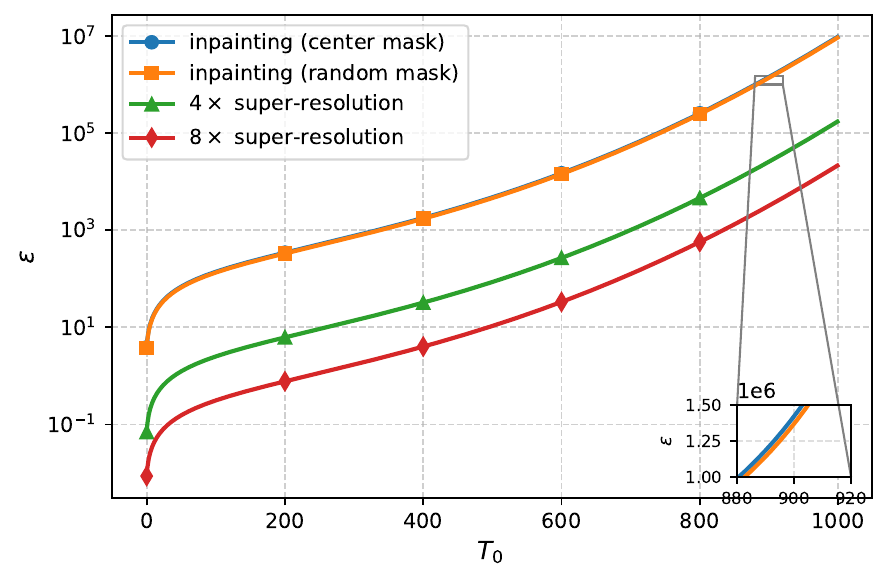}
    \vspace{-20pt} 
    \caption{The score function approximation error curves in each inverse problem as a function of \(T_0\), as characterized by Theorem~\ref{th.3}.}
    \label{fig.err_curve}
\end{wrapfigure}

Figure~\ref{fig:lpips_all} evaluates the performance of the proposed approach in terms of the LPIPS metric for restoration results obtained using both class-conditional and class-unconditional diffusion models across various inverse problems and values of \(T_0\). Unlike pixel-based metrics, LPIPS compares feature representations extracted by a neural network. In this simulation, AlexNet is used to assess perceptual similarity. Lower LPIPS values indicate higher perceptual quality. \blue{Unlike PSNR and SSIM, increasing \(T_0\) leads to a noticeable increase in LPIPS, reflecting a degradation in image quality due to the diminished validity of the guidance term at higher time steps. However, the loss in performance remains minimal when \(T_0\) is increased to \(200\), enabling substantial computational savings corresponding to \(200\) steps of faster score estimation.} Among the tasks, \(8\times\) super-resolution and inpainting with center mask exhibits the greatest performance decline, whereas inpainting with a random mask shows the least. Similar to the cases of PSNR and SSIM, the class-conditional diffusion model slightly outperforms its class-unconditional counterpart according to the LPIPS metric.

\blue{Table~\ref{tab:invprob_results} presents a quantitative comparison between \pgdm and the proposed method across four inverse problems, evaluated on a $1000$-image ImageNet subset. The proposed approach consistently reduces inference time per image by approximately $5$--$8$ seconds across all tasks, while maintaining comparable reconstruction quality. These results demonstrate the effectiveness of the proposed method with $T_0 = 200$.
}

\blue{Figure~\ref{fig.err_curve} illustrates the score function estimation error \(\epsilon\) for each inverse problem as a function of \(T_0\), as characterized by Theorem~\ref{th.3}. As \(T_0\) increases, the estimation error of the score function increases accordingly. The figure also shows that different inverse problems exhibit distinct error curves, reflecting the influence of their respective degradation matrices \(\Cb\). Moreover, the acceptable level of estimation error varies across problems. For example, as indicated by the LPIPS results in Fig.~\ref{fig.lpips_cond} and Fig.~\ref{fig.lpips_uncond}, inpainting with a random mask can operate with larger values of \(T_0\) than \(4\times\) super-resolution, despite exhibiting higher score estimation error. This observation highlights that, when using these curves to identify a suitable initial value for \(T_0\), both the degradation matrix of the inverse problem and its acceptable error tolerance must be taken into account. Under these conditions, the error curves provide useful guidance for selecting an appropriate starting point in the optimization of \(T_0\).
}

Figures~\ref{fig:time_inp1_all}--\ref{fig:time_sr8_all} present the average inference time per image across various inverse problems and different values of \(T_0\). Here, inference time refers to the duration required by the algorithm to restore a given image. In all these figures, the baseline inference time corresponds to the \pgdm algorithm at \(T_0 = 0\), represented by a dashed red line. From Figs.~\ref{fig:time_inp1_all}--\ref{fig:time_sr8_all}, we can observe that increasing the value of \(T_0\) generally accelerates the algorithm. This is because a higher \(T_0\) implies fewer diffusion steps requiring the derivative computation of the diffusion model, a computationally expensive operation involved in calculating the guidance term.


Figure~\ref{fig:time_inp1_all} presents the inference time per image for the inpainting problem with a center mask, for both class-conditional and class-unconditional diffusion models. At \(T_0 = 500\), where losses in PSNR and SSIM are negligible, the algorithm achieves speed improvements of \(24\%\) and \(25\%\) for the unconditional and conditional models, respectively. The difference in speedup arises because the class-conditional denoising model is more complex, making backpropagation for guidance computation more demanding. Consequently, reducing the number of these backpropagations yields greater computational savings.


Figure~\ref{fig:time_inp2_all} illustrates inference times for the inpainting problem with random mask under various \(T_0\) values for both model types. Consistent with previous results, increasing \(T_0\) accelerates the algorithm. Notably, at \(T_0 = 500\), where metric losses remain negligible, inference time is again reduced by \(24\%\) and \(25\%\) for unconditional and conditional models, respectively.


Figure~\ref{fig:time_sr4_all} demonstrates inference times for the \(4\times\)super-resolution problem. Similar trends are observed: higher \(T_0\) values lead to faster inference. At \(T_0=500\), where metric degradation is minimal, speed improvements over \pgdm amount to \(21\%\) for the unconditional model and \(23\%\) for the conditional model.


Finally, Figure~\ref{fig:time_sr8_all} presents inference times for the \(8\times\) super-resolution task. From these two figures, we again observe that increasing \(T_0\) decreases inference time. At \(T_0=500\), the method is faster than \pgdm by \(23\%\) and \(24\%\) for the unconditional and conditional diffusion models, respectively.


\begin{figure}[htbp]
  \centering
  \subfloat[Class conditional]{%
    \includegraphics[width=0.5\linewidth]{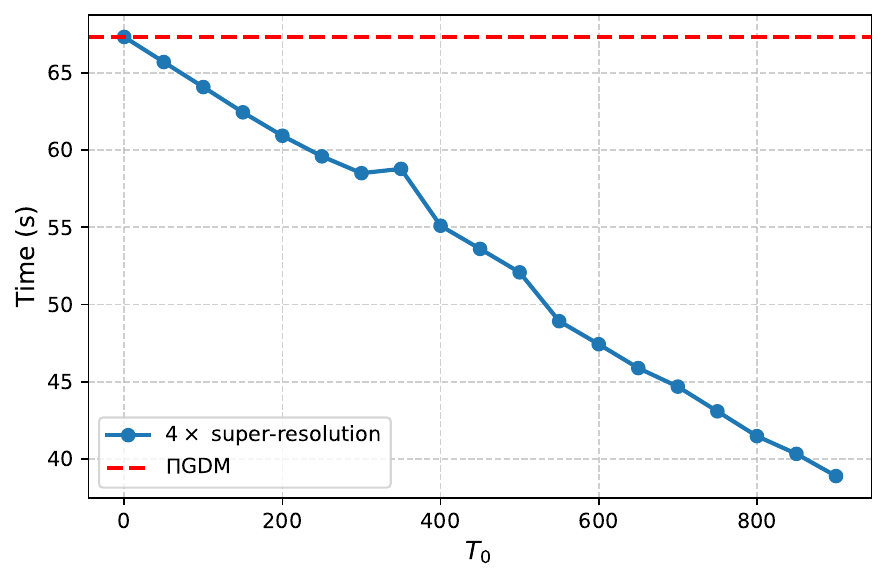}%
  }
  \hfill
  \subfloat[Class unconditional]{%
    \includegraphics[width=0.5\linewidth]{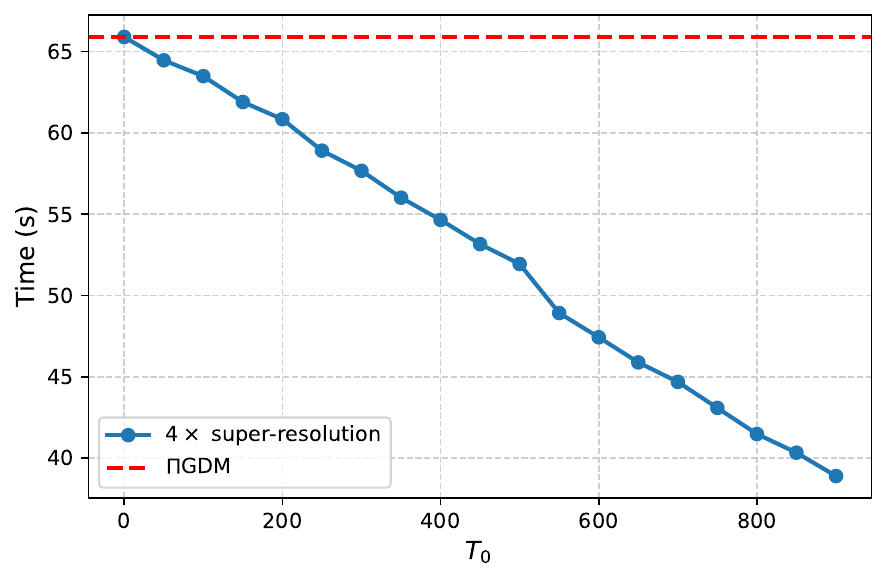}%
  }
  \caption{Average inference time per image as a function of the guidance threshold \(T_0\) for the \(4\times\) super-resolution task. \(T_0\) controls when each component of the piece-wise guidance term is active. \pgdm  corresponds to \(T_0 = 0\).}
  \label{fig:time_sr4_all}
\end{figure}

\begin{figure}[htbp]
  \centering
  \subfloat[Class conditional]{%
    \includegraphics[width=0.5\linewidth]{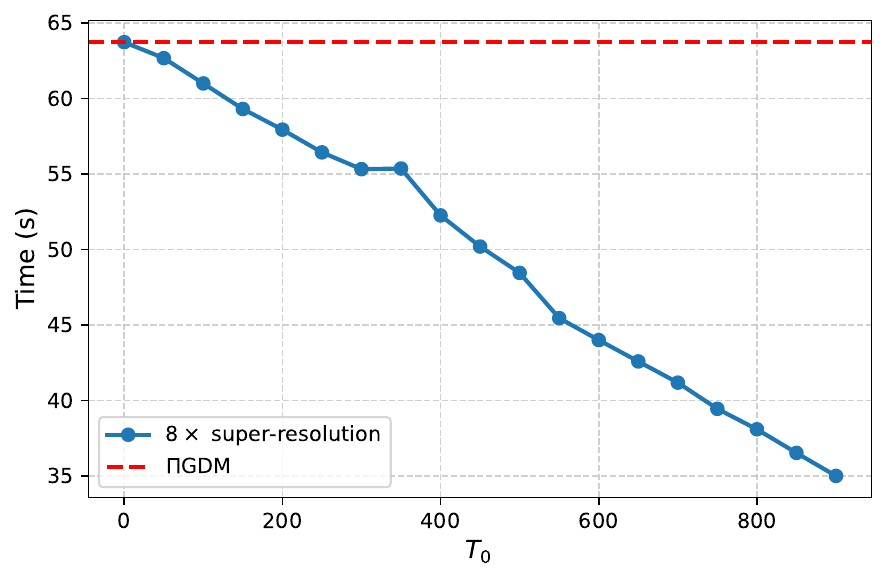}%
  }
  \hfill
  \subfloat[Class unconditional]{%
    \includegraphics[width=0.5\linewidth]{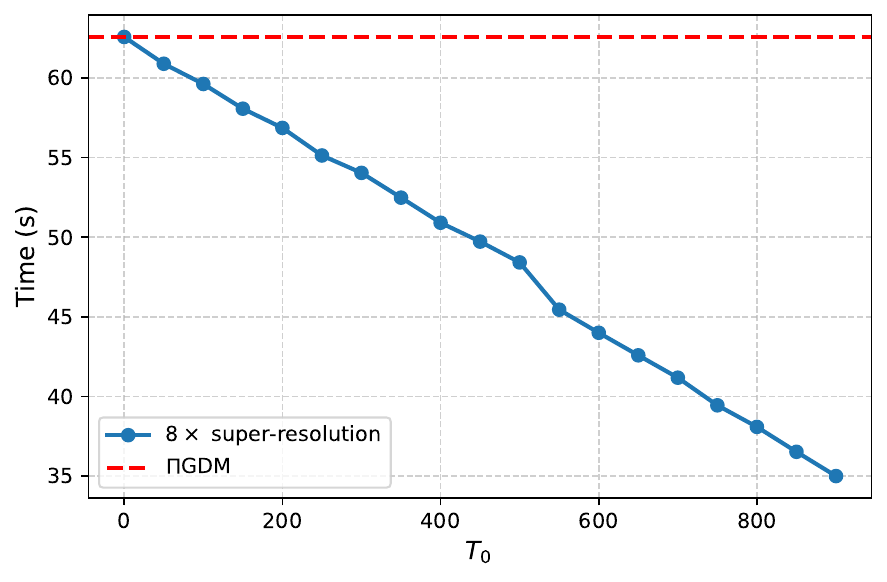}%
  }
  \caption{Average inference time per image as a function of the guidance threshold \(T_0\) for the \(8\times\) super-resolution task. \(T_0\) controls when each component of the piece-wise guidance term is active. \pgdm  corresponds to \(T_0 = 0\).}
  \label{fig:time_sr8_all}
\end{figure}

\section{Conclusion}\label{sec.conclusion}
In this paper, a novel framework for solving inverse problems by posterior sampling using diffusion models has been developed. We have proposed a piecewise function approximation for the guidance term based on the different levels of diffusion noise determined by the time step. We have shown that in low time steps, the computation of the guidance term can be simplified to the gradient of a Gaussian function because the diffusion noise is insignificant in that regime. Our simulation results showed that an inference time reduction of at least \(23\%\) is achievable without loss of PSNR and SSIM in the restored images compared to the baseline \(\Pi\)GDM.


\subsubsection*{Acknowledgments}
This research was supported by the U.S. National Science Foundation under Grant CNS-2210254.

\bibliography{references}
\bibliographystyle{tmlr}


\end{document}